\definecolor{penndarkestblue}{cmyk}{1,0.74,0,0.77}
\definecolor{penndarkerblue}{cmyk}{1,0.74,0,0.70}
\definecolor{pennblue}{cmyk}{0.99,0.66,0,0.57} 
\definecolor{pennlighterblue}{cmyk}{0.98,0.44,0,0.35}
\definecolor{pennlightestblue}{cmyk}{0.38,0.17,0,0.17} 
\definecolor{penndarkestred}{cmyk}{0,1,0.89,0.66}
\definecolor{penndarkerred}{cmyk}{0,1,0.88,0.55}
\definecolor{pennred}{cmyk}{0,1,0.83,0.42} 
\definecolor{pennlighterred}{cmyk}{0,1,0.6,0.24}
\definecolor{pennlightestred}{cmyk}{0,0.43,0.26,0.12} 
\definecolor{penndarkestgreen}{cmyk}{1,0,1,0.68}
\definecolor{penndarkergreen}{cmyk}{1,0,1,0.57}
\definecolor{penngreen}{cmyk}{1,0,1,0.44} 
\definecolor{pennlightergreen}{cmyk}{1,0,1,0.25}
\definecolor{pennlightestgreen}{cmyk}{0.43,0,0.43,0.13}
\definecolor{penndarkestorange}{cmyk}{0,0.65,1,0.49}
\definecolor{penndarkerorange}{cmyk}{0,0.65,1,0.33}
\definecolor{pennorange}{cmyk}{0,0.54,1,0.24} 
\definecolor{pennlighterorange}{cmyk}{0,0.32,1,0.13}
\definecolor{pennlightestorange}{cmyk}{0,0.15,0.46,0.06}
\definecolor{penndarkestpurple}{cmyk}{0,1,0.11,0.86}
\definecolor{penndarkerpurple}{cmyk}{0,1,0.13,0.82}
\definecolor{pennpurple}{cmyk}{0,1,0.11,0.71} 
\definecolor{pennlighterpurple}{cmyk}{0,1,0.05,0.46}
\definecolor{pennlightestpurple}{cmyk}{0,0.35,0.02,0.23}
\definecolor{pennyellow}{cmyk}{0,0.20,1,0.05} 
\definecolor{pennlightgray1}{cmyk}{0,0,0,0.05}
\definecolor{pennlightgray3}{cmyk}{0.01,0.01,0,0.18}
\definecolor{pennmediumgray1}{cmyk}{0.04,0.03,0,0.31}
\definecolor{pennmediumgray4}{cmyk}{0.08,0.06,0,0.54}
\definecolor{penndarkgray2}{cmyk}{0.09,0.07,0,0.71}
\definecolor{penndarkgray4}{cmyk}{0.1,0.1,0,0.92}
\def\SO3{\mathrm{SO(3)}}
\newtheorem{lemma}{\hspace{0pt}\bf Lemma}
\newtheorem{theorem}{\hspace{0pt}\bf Theorem}
\newtheorem{corollary}{\hspace{0pt}\bf Corollary}
\newtheorem{remark}{\hspace{0pt}\bf Remark}
\newtheorem{definition}{\hspace{0pt}\bf Definition}
\journal{Journal of Signal Processing}
\begin{document}

\begin{frontmatter}

\title{Stability of Graph Convolutional Neural Networks\\ to Stochastic Perturbations}

\author{Zhan Gao$^{\dagger}$\fnref{}, Elvin Isufi$^{\ddagger }$ and Alejandro Ribeiro$^{\dagger}$}
\fntext[]{$^{ \dagger}$Department of Electrical and Systems Engineering, University of Pennsylvania, Philadelphia, PA, USA. Email: $\{$gaozhan,aribeiro$\}$@seas.upenn.edu. $^{\ddagger }$Department of Intelligent Systems, Delft University of Technology, Delft, The Netherlands. Email: e.isufi-1@tudelft.nl}

\begin{abstract}
Graph convolutional neural networks (GCNNs) are nonlinear processing tools to learn representations from network data. A key property of GCNNs is their stability to graph perturbations. Current analysis considers deterministic perturbations but fails to provide relevant insights when topological changes are random. This paper investigates the stability of GCNNs to stochastic graph perturbations induced by link losses. In particular, it proves the expected output difference between the GCNN over random perturbed graphs and the GCNN over the nominal graph is upper bounded by a factor that is linear in the link loss probability. We perform the stability analysis in the graph spectral domain such that the result holds uniformly for any graph. This result also shows the role of the nonlinearity and the architecture width and depth, and allows identifying handle to improve the GCNN robustness. Numerical simulations on source localization and robot swarm control corroborate our theoretical findings.
\end{abstract}

\begin{keyword}
Graph convolutional neural networks, graph filters, stability property, stochastic perturbations
\end{keyword}

\end{frontmatter}


\section{Introduction} \label{sec:intro}

Graph convolutional neural networks (GCNNs) \cite{Defferrard2016, Fernando2019, Wu2019} have shown remarkable success in a variety of network data applications including recommender systems \cite{Ying2018, Wu20192}, wireless communications \cite{eisen2020optimal, gao2020resource}, and distributed agent control \cite{tolstaya2020learning} among others. One of the key properties of GCNNs for this success is their stability to perturbations in the underlying topology \cite{gama2020graphs}. The latter plays a crucial role when deploying the GCNN on a topology that changes (slightly) from the nominal one used during training. This is a typical scenario encountered in aforementioned applications and allows the GCNN to maintain its performance.

Characterizing the stability of the GCNN to topological perturbations allows identifying handle to improve its robustness. This direction has recently attracted attention in the community \cite{zou2020graph, gama2019diffusion, levie2019transferability, gama2020stability, ruiz2020grapha, ruiz2020graphb}. The work in \cite{zou2020graph} considered the stability of graph scattering transforms --non-trainable graph neural networks using graph wavelet filters \cite{hammond2011wavelets}-- to perturbations of the underlying graph. Subsequently, authors in \cite{gama2019diffusion} specialized the latter to diffusion wavelets \cite{coifman2006diffusion} and characterized its stability to graph deformations measured by the diffusion distance \cite{coifman2006diffusion1}. For trainable architectures, the work in \cite{levie2019transferability} established that graph convolutional neural networks are stable to topological perturbations. It proved GCNNs yield similar representations on graphs describing the same phenomenon. Differently, authors in \cite{gama2020stability, ruiz2020grapha} investigated the stability of GCNNs under relative perturbations, i.e., perturbations that tie to the underlying graph. Their results showed GCNNs can be both stable to perturbations and discriminative in the high graph frequency information. The work in \cite{ruiz2020graphb} extends these stability results to the graphon neural network, where the graphon is the limit of convergent graph sequences.

The robustness of the GCNN has also been studied under targeted attacks on the underlying graph. The work in \cite{zugner2018adversarial} focused on designing adversarial attacks on the graph edges and the node signals to misclassify the target node label, and authors in \cite{dai2018adversarial} designed adversarial attacks via reinforcement learning on both node-level and graph-level classification tasks. In parallel, the work in \cite{bojchevski2019certifiable} investigated the robustness of the GCNN under adversarial attacks on a subset of graph edges, and showed nodes will not change the learned label under these attacks in node classification tasks. Similar robustness results have also been established in \cite{zugner2019certifiable} under adversarial attacks on the node signals.

Altogether, the above results discuss the GCNN stability to deterministic graph perturbations, where perturbation sizes are assumed small. However, the graph topology often changes randomly resulting in large stochastic perturbations \cite{Isufi17, Zhan2020}. This is the case when the GCNN is employed distributively on physical networks, where each node computes its output by communicating with its direct neighbors \cite{Zou2013, Shuman2018, gao2020wide}. For example, in a wireless sensor network where nodes are sensors and edges are communication links, the latter may break randomly due to channel fading effects leading to stochastic communication graphs \cite{structural2004, kar2008sensor}. Other applications in which the GCNN has to cope with random topologies involve distributed robot coordination \cite{antonelli2014decentralized}, smart grids \cite{Gungor2010}, and road networks \cite{deng2016latent}. In these instances, the GCNN operates over a sequence of random graphs, which leads to a random output. The work in \cite{Zhan2020} approached this setting and proposed stochastic graph neural networks (SGNNs) to account for the graph randomness during training. That is, the SGNN is not trained anymore over the nominal deterministic graph but rather over random graphs. Their results showed the SGNN can learn representations that account for the graph randomness. This procedure, however, requires the training to be averaged over different graph realizations which may be computationally inefficient. To improve on the latter, we aim here to characterize the stability of the GCNN to stochastic perturbations --architectures trained over the nominal graph and deployed over random perturbed graphs. This gives insights on the impact of graph stochasticity and allows identifying handle to control it. However, this stability analysis differs from that conducted in earlier works because the topological randomness could induce large graph perturbations and needs to be accounted in a statistical fashion. The GCNN will operate over a sequence of random graphs rather than a single graph and in turn, this requires handling multiple perturbed topologies rather than a fixed one.

In this paper, we investigate the stability of graph convolutional neural networks to stochastic topological perturbations induced by random link losses. By developing the new methodology that avoids the small perturbation assumption in the deterministic setting \cite{gama2020stability}, we derive results that account for the successions of statistically perturbed graphs throughout the GCNN architecture and characterize the explicit impact of graph stochasticity on the architecture behavior. We find out the GCNN is Lipschitz stable to stochastic perturbations in the underlying graph and maintains its performance when the link losses are mild. Our detailed contributions can be summarized as follows.

\smallskip
\begin{enumerate}[(i)]

\item \emph{Stability of graph filters (Section \ref{sec:GraphStability})}: We prove the expected output difference of the graph filter induced by stochastic perturbations is upper bounded proportionally to the link loss probability [Thm. \ref{theorem:filterStability}]. To conduct such analysis, we develop the concept of generalized graph filter frequency response, which allows for the spectral analysis over a sequence of random graphs. We also defined the Lipschitz gradient and put forth the generalized integral Lipschitz condition for tractable mathematical analysis of the GCNN in the stochastic setting. These allow us to claim the stability result independently on the underlying topology.

\item \emph{Stability of graph convolutional neural networks (Section \ref{sec:GCNNStability})}: Leveraging the stability result of the graph filter, we prove the GCNN is stable to stochastic perturbations with a factor proportional to the link loss probability [Thm. \ref{theorem:GNNstability}]. The result indicates also the explicit impact of the filter, nonlinearity, and architecture width and depth on the stability. In particular, a wider and deeper GCNN decreases the stability while improving the performance, indicating a trade-off between such two factors.

\end{enumerate}

These theoretical contributions are corroborated with numerical results on distributed source localization and robot swarm control in Section \ref{sec:numericalExperiments}. The paper conclusions are drawn in Section \ref{sec:conclusions}. All proofs are collected in the appendix.

\section{Stochastic Perturbations on Graph Convolutional Neural Networks} \label{sec:GNNStochasticPerturbations}

Consider an undirected graph $\ccalG=(\ccalV, \ccalE)$ with node set $\ccalV = \{ 1, \ldots, n \}$ and edge set $\ccalE = \{ (i,j) \}$. The graph is represented by the symmetric graph shift operator matrix (e.g., adjacency or Laplacian) $\bbS$ with entry $[\bbS]_{ij} \neq 0$ if and only if $(i,j)\in \ccalE$ or $i=j$. 
The data supported on the vertices of graph $\ccalG$ forms the graph signal and it is a vector $\bbx \in \mathbb{R}^{n}$ with $i$th entry $[\bbx]_i$ assigned to node $i$. The graph encodes similarities between signal values implying that two entries $[\bbx]_i$ and $[\bbx]_j$ are related if there exists an edge between the respective nodes \cite{Sandry2013, Gavili2017, ortega2018}. For instance, in a wireless sensor network, where nodes are sensors and edges are communication links, the graph signal may be sensor measurements.

We are interested in learning representations for the graph and the graph signal. Specifically, given a training set $\ccalT = \{ (\bbx, \bby) \}$ with $\bbx$ the graph signal and $\bby$ the target representation, our goal is to extract an associate representation $\bby$ from an unseen signal $\bbx \notin \ccalT$. This task can be addressed successfully with graph convolutional neural networks (GCNNs), which are information processing architectures that exploit the coupling between the graph signal $\bbx$ and the underlying graph $\ccalG$. GCNNs consist of cascaded layers with each layer comprising a bank of graph filters and a nonlinearity. At layer $\ell$, the GCNN has $F$ input features $\{ \bbx^g_\ell \}_{g=1}^{F}$ processed by $F^2$ graph filters to generate the convolutional features  
\begin{equation}\label{eq:graphFilter}
\bbu^{fg}_\ell = \bbH_{\ell}^{fg}(\bbS) \bbx^g_{\ell-1}:= \sum_{k=0}^K h_{\ell k}^{fg} \bbS^k \bbx_{\ell-1}^g~\text{for}~f,g=1,\ldots,F 
\end{equation}
where $\bbH_{\ell}^{fg}(\bbS) := \sum_{k=0}^K h_{\ell k}^{fg} \bbS^k$ is the graph filter with coefficients $\{ h_{\ell k}^{fg} \}_{k=0}^K$ that processes the $g$th input $\bbx_{\ell-1}^g$ to yield the $f$th output $\bbu_\ell^{fg}$. The graph filter in \eqref{eq:graphFilter} is a polynomial of the graph shift operator $\bbS$ and aggregates signal information from a neighborhood up to radius $K$ at each node, such that the generated features capture a complete picture of the signal-network coupling in each layer. The convolutional features $\bbu_\ell^{fg}$ are subsequently aggregated over the input index $g$ and fed into the nonlinearity $\sigma(\cdot)$ to produce the $F$ output features of layer $\ell$
\begin{equation}\label{eq:GNNArchi}
\bbx^f_\ell = \sigma_\ell \Big( \sum_{g=1}^{F}\bbu_{\ell}^{fg}\Big)~\text{for}~ f=1,\ldots,F.
\end{equation}
These output features are propagated down the cascade until obtaining the output features of the last layer $L$. Without loss of generality, we assume a single input $\bbx = \bbx_0^1$ and a single output $\bbx_L^1$, and define the nonlinear GCNN map $\bbPhi(\bbx; \bbS, \ccalH) = \bbx_L^1$ where parameters $\ccalH = \{ h_{\ell k}^{fg} \}$ collects all filter coefficients.

The GCNN defined by the propagation rule \eqref{eq:graphFilter}-\eqref{eq:GNNArchi} is readily distributable. I.e., it can be implemented at each node by requiring communication only with the immediate neighbors. Specifically, the signal shift $\bbS \bbx$ in \eqref{eq:graphFilter} can be computed by local information exchanges between nodes as
\begin{equation}\label{eq:GSODistributed}
[\bbS \bbx]_i = \sum_{j=1}^n [\bbS]_{ij}[\bbx]_j = \sum_{(i,j) \in \ccalE} [\bbS]_{ij} [\bbx]_j~\text{for}~ i=1,\ldots,n.
\end{equation}
Likewise, the $k$-shifted operation $\bbS^k \bbx$ repeats this process $k$ times and can be computed distributively through $k$ local node exchanges as $\bbS(\bbS^{k-1}\bbx)$ \cite{Coutino2019}. Therefore, the graph filter in \eqref{eq:graphFilter} can be implemented distributively and since the nonlinearity in \eqref{eq:GNNArchi} is pointwise, the overall GCNN results into a distributed processing architecture.

\begin{figure*}[t]
\centering
\includegraphics [width = 0.205\linewidth]
                 {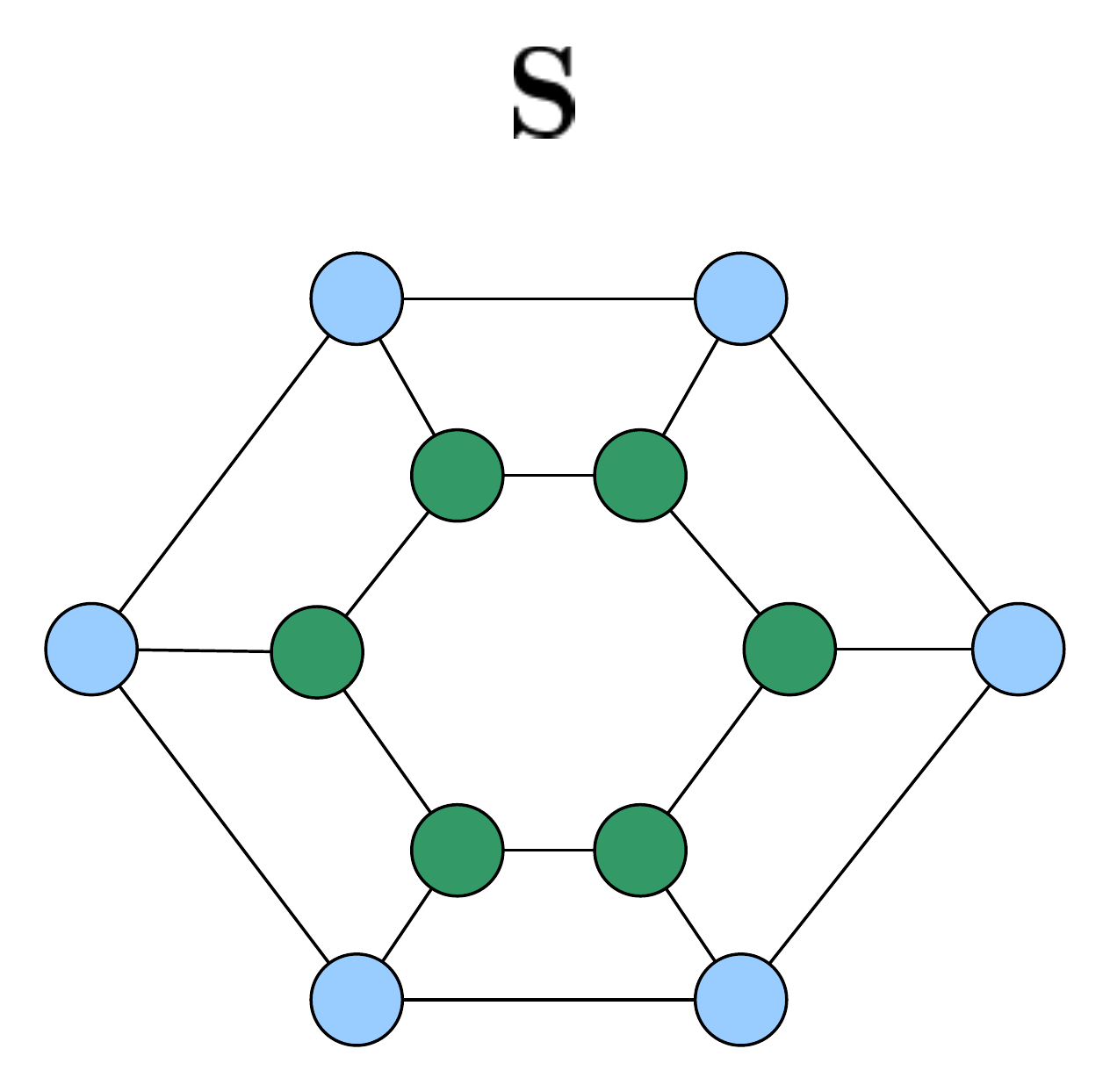}\qquad
\includegraphics [width = 0.205\linewidth]
                 {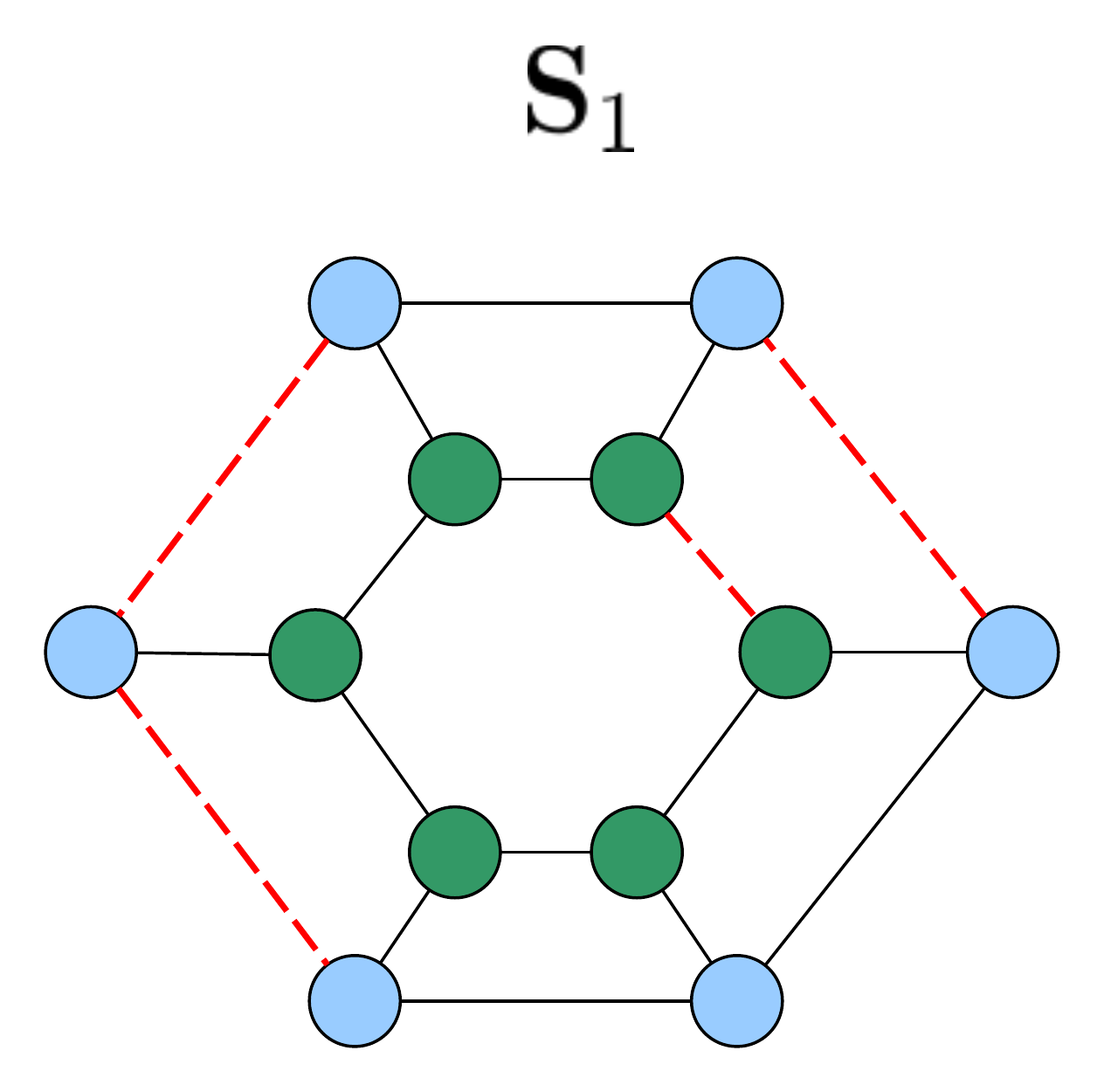}\qquad
\includegraphics [width = 0.205\linewidth]
                 {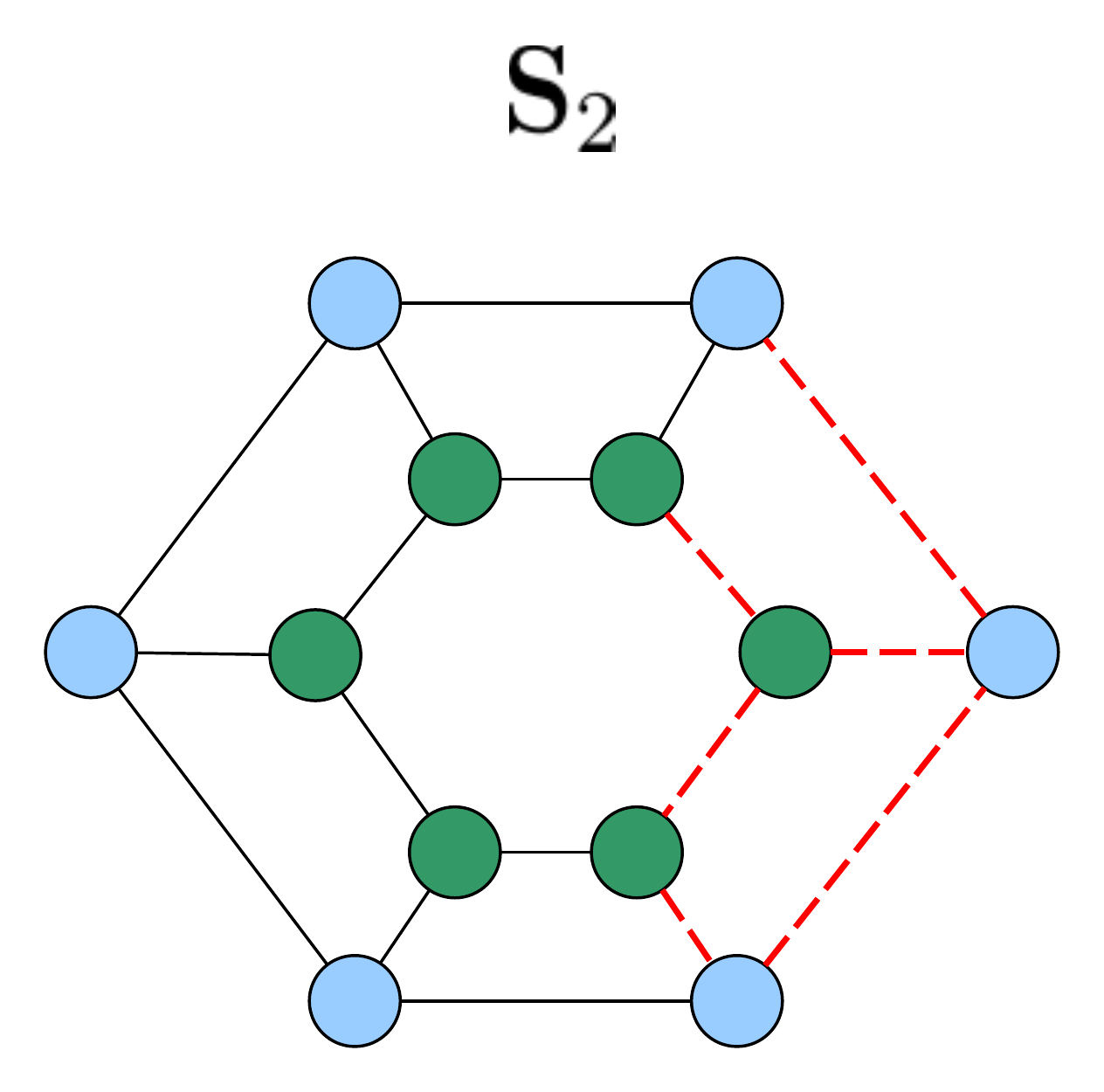}\qquad
\includegraphics [width = 0.205\linewidth]
                 {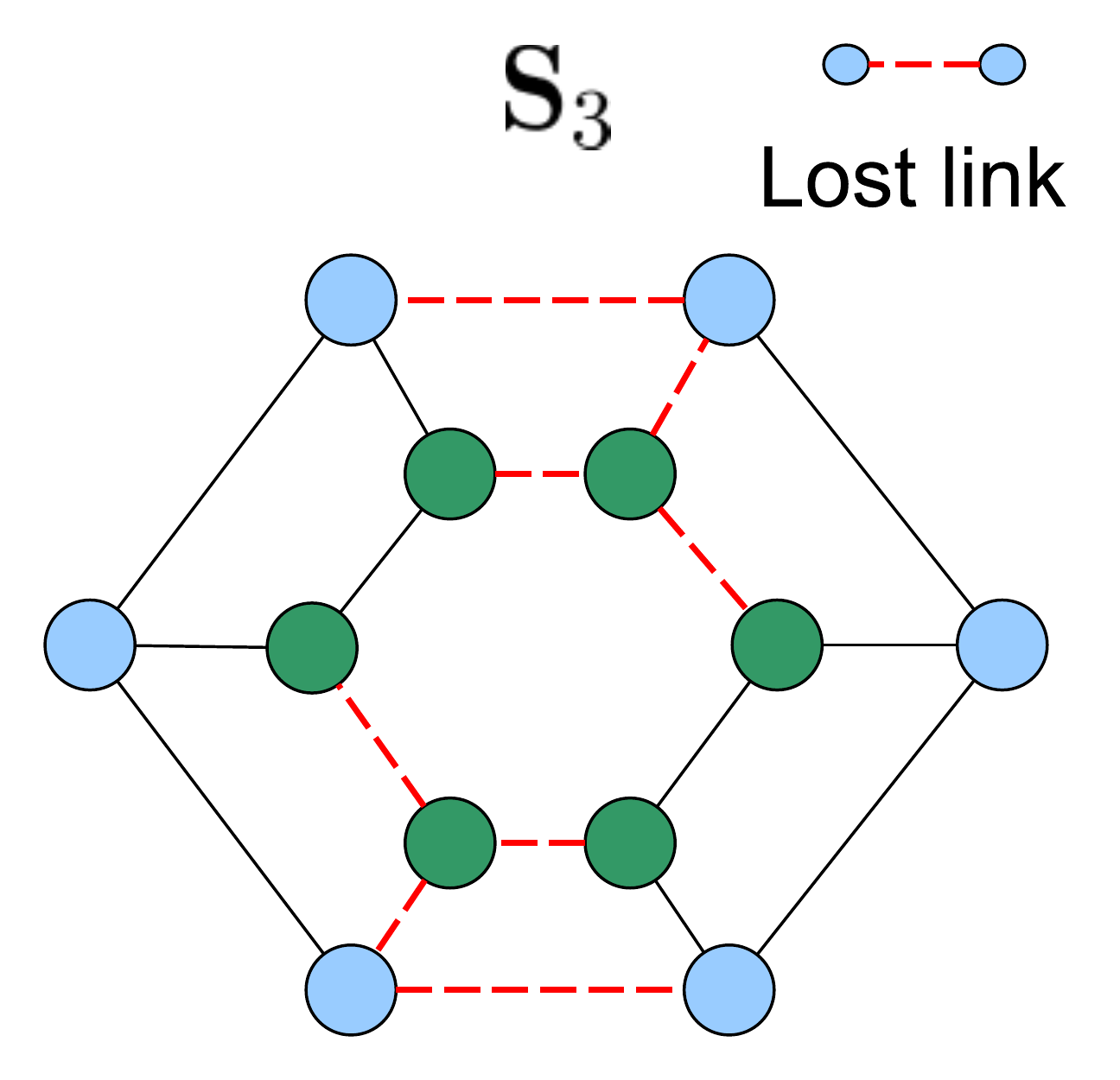} \\ \bigskip

\def \thisplotscale {1.2}
\def \unit {\thisplotscale cm}

\tikzstyle {Phi} = [rectangle,
                    thin,
                    minimum width = 0.7*\unit,
                    minimum height = \sumshift*\unit,
                    anchor = west,
                    draw,
                    fill = blue!20]

\tikzstyle {sum} = [circle,
                    thin,
                    minimum width  = 0.3*\unit,
                    minimum height = 0.3*\unit,
                    anchor = center,
                    draw,
                    fill = blue!20]

\def \deltax {2.0}
\def \deltay {0.8}
\def \sumshift {0.4}

\begin{tikzpicture}[x = 1*\unit, y = 1*\unit]

\node (first) [] {};

\path (first) ++ (0.15*\deltax, 0) node (0) [Phi] {$\bbS_0 = \bbI$};
\path (0)     ++ (0.9*\deltax, 0) node (1) [Phi] {$\bbS_1$};
\path (1)     ++ (1.0*\deltax, 0) node (2) [Phi] {$\bbS_2$};
\path (2)     ++ (1.0*\deltax, 0) node (3) [Phi] {$\bbS_3$};

\path (3.east) ++ (0.7*\sumshift*\deltax, 0) node [anchor=west] (last) [] {};

\path[-stealth] (first) edge [above            ] node {$\bbx~~$}               (0);	
\path[-stealth] (0)     edge [above, near start] node {$\! \bbS_0\bbx$}   (1);	
\path[-stealth] (1)     edge [above, near start] node {$\!\!~~ \bbS_1\bbS_0\bbx$} (2);	
\path[-stealth] (2)     edge [above, near start] node {$\!~~~~~ \bbS_2\bbS_1\bbS_0\bbx$} (3);\path[-]        (3)     edge [above, near end  ] node {$\!~~~~~~ ~~\bbS_{3}\bbS_2\bbS_1\bbS_0\bbx$} (last);				
\path (0.east) ++ (\sumshift*\deltax, -\deltay) node (sum0) [sum] {$+$};
\path (1.east) ++ (\sumshift*\deltax, -\deltay) node (sum1) [sum] {$+$};
\path (2.east) ++ (\sumshift*\deltax, -\deltay) node (sum2) [sum] {$+$};
\path (3.east) ++ (0.7*\sumshift*\deltax, -\deltay) node (sum3) [sum] {$+$};

\path[-stealth, draw] (sum0 |- 0) --node[right] {$ h_0 $} (sum0); 	
\path[-stealth, draw] (sum1 |- 1) --node[right] {$ h_1 $} (sum1);	
\path[-stealth, draw] (sum2 |- 2) --node[right] {$ h_2 $} (sum2);	
\path[-stealth, draw] (sum3 |- 3) --node[right] {$ h_3 $} (sum3);	

\path[-stealth, draw] (sum0) -- (sum1);	
\path[-stealth, draw] (sum1) -- (sum2);	
\path[-stealth, draw] (sum2) -- (sum3);	

\path[-stealth] (sum3) edge [above] node
                {~~$~~~\tilde{\bbH}(\bbS)\bbx$} ++ (0.4*\deltax, 0);

\end{tikzpicture}
\caption{The graph filter $\tilde{\bbH}(\bbS)$ of order $K=3$ run over random graphs. In the distributed implementation, the input signal $\bbx$ is shifted over a chain of random graph shift operators $\bbS_1, \bbS_2, \bbS_3$ and the shifted signals are aggregated to generate the filter output.} 
\label{fig.evRecMain}
\end{figure*}

In a distributed setting, the GCNN needs to cope with random topological perturbations in the graph structure. This is the case when the communication links fall with a certain probability in a wireless sensor network due to channel effects or when the GCNN is susceptible to adversarial attacks. These topological perturbations affect the signal shifts in \eqref{eq:graphFilter} and change the filtering operation to\footnote{We drop the subscript $\ell$ and the superscripts $fg$ of $\tilde{\bbu}_\ell^{fg}$ to simplify notation.}
\begin{equation}\label{eq:randomGraphFilter}
\tilde{\bbu} = \tilde{\bbH}(\bbS) \bbx = \sum_{k=0}^K h_k \bbS_k \cdots \bbS_1\bbS_0 \bbx
\end{equation}
where $\{ \bbS_k \}_{k=1}^K$ are random variants of the underlying shift operator $\bbS$ and $\bbS_0 = \bbI$ is the identity matrix---see Fig. \ref{fig.evRecMain}. If these stochastic perturbations are substantial or uncontrolled, the resulting GCNN output run with random graph filters [cf. \eqref{eq:randomGraphFilter}] will deviate substantially from the learned representation w.r.t. the nominal graph. Hence, it is necessary to quantify their effects. 

This paper aims to shed light on the effects stochastic perturbations have on the GCNN output and, ultimately, identify handle to control them. To be more precise, we consider the graph is affected by random edge drops in each shift and conduct a statistical analysis of the GCNN stability. Using these results, we characterize the role played by different architecture components such as the filter shape, nonlinearity, and architecture width and depth.

\begin{remark} \normalfont
While the filtering operation in \eqref{eq:randomGraphFilter} resembles the form of the edge-variant graph filter \cite{Coutino2019}, these two are substantially different concepts. The edge-variant graph filter weighs differently the neighborhood information with a set of edge-weight matrices $\{ \bbA_k \}_{k=1}^K$, which share the same graph topology but assign different weights to graph edges \cite{Coutino2019}. These weights are trainable or can be designed for the task at hand. However, the filtering operation in \eqref{eq:randomGraphFilter} runs over a sequence of non-trainable (or non-designable) graph shift operators $\{ \bbS_k \}_{k=1}^K$ because of stochastic graph perturbations, which are imposed by external factors (e.g., environment influence, human effects, or adversarial attacks) but are not design choices. These perturbed shift operators $\{ \bbS_k \}_{k=1}^K$ capture the random graph topologies determined by encountered stochastic perturbations, and cannot be learned during training. The trainable parameters in \eqref{eq:randomGraphFilter} are only the filter coefficients $\{ h_k \}_{k=0}^K$.
\end{remark}

\section{Stability of Graph Filters}\label{sec:GraphStability}

As it follows from \eqref{eq:graphFilter}-\eqref{eq:GNNArchi}, analyzing the robustness of the GCNN to stochastic perturbations requires analyzing first the robustness of the graph filter. For this analysis, we consider edges drop randomly with a given probability as defined by the following random edge sampling (RES) model \cite{Isufi17}.

\begin{definition}[Random Edge Sampling]
Given the underlying graph $\ccalG = \{ \ccalV, \ccalE \}$ and an edge sampling probability $p$, the RES($\ccalG, p$) model generates random graphs $\ccalG_k = \{ \ccalV, \ccalE_k \}$ with the same node set $\ccalV$ and a random edge set $\ccalE_k \in \ccalE$ such that edge $(i,j)$ exists in $\ccalG_k$ with probability $p$, i.e.,
\begin{equation}\label{eq:RESmodel}
\text{Pr} \left[(i,j) \in \ccalE_k\right] = p,~\forall~ (i,j) \in \ccalE
\end{equation}
where $\text{Pr} \left[\cdot\right]$ is the probability measure.
\end{definition}

\noindent The RES($\ccalG, p$) model implies the subgraph realization $\ccalG_k$ has edges $(i,j)\in \ccalE_k$ drawn independently from the underlying edge set $\ccalE$ with probability $p$, leading to a different shift operator $\bbS_k$. This model is motivated by practical distributed applications over physical networks, where the graph is determined by the physicalities of the network and the graph edges correspond to the physical links. In these scenarios, edges (i.e., links) drop randomly due to independent external factors and this results in stochastic graph perturbations that adhere to the RES model. Therefore, running a filter over a sequence of RES($\ccalG,p$) graphs $\ccalG_1, \ldots, \ccalG_K$ leads to a random filter output [cf. \eqref{eq:randomGraphFilter}] that differs from the filter output on the nominal graph $\ccalG$ [cf. \eqref{eq:graphFilter}]. The deviation depends on how the signal is propagated through the chain of random shift operators $\bbS_1, \ldots, \bbS_K$.

To characterize the effects of stochastic perturbations on the filter output, we conduct the analysis in the graph spectral domain. The spectral domain has also been used to characterize the stability of the GCNN to small deterministic graph perturbations \cite{gama2020stability}. However, stochastic perturbations are not relatively small making the earlier deterministic results \cite{gama2020stability} inapplicable in the stochastic setting. To overcome this issue, we propose the novel concept of generalized graph filter frequency response allowing for the spectral analysis of graph filters over random graphs. Subsequently, we define the Lipschitz gradient and generalize the integral Lipschitz condition for graph filters from the deterministic setting to the stochastic setting. These tools allow characterizing the GCNN stability mathematically and provide insights on the role played by the graph stochasticity as well as different actors.

\subsection{Filter Frequency Response over Random Graphs}\label{subsec:generalizedFrequency}

\begin{figure}[t]
\centering

\def \thisplotscale {3.15}
\def \unit {\thisplotscale cm}

\def \frequencyresponse 
     {   0.7*exp(-(1*(x-1.2))^2) 
       + 0.8*exp(-(0.7*(x-4))^2) 
       + 0.6*exp(-(1.4*(x-6))^2) 
       + 0.1}

{\footnotesize
\begin{tikzpicture}[x = 1.0*\unit, y=0.8*\unit]

\def \factorx {2.6/8}
\def \deltax  {0.5*\factorx}
\def \shadeshift  {0.05}

\path [fill=black!20, opacity = 0.5] 
              (\deltax - 0.001*\factorx - \shadeshift, 0.00) rectangle 
              (\deltax + 0.030*\factorx + \shadeshift, 1.00);
\path [fill=black!20, opacity = 0.5] 
              (\deltax + 1.413*\factorx - \shadeshift, 0.00) rectangle 
              (\deltax + 1.67*\factorx + \shadeshift, 1.00);
\path [fill=black!20, opacity = 0.5] 
              (\deltax + 3.393*\factorx - \shadeshift, 0.00) rectangle 
              (\deltax + 3.770*\factorx + \shadeshift, 1.00);
\path [fill=black!20, opacity = 0.5] 
              (\deltax + 6.048*\factorx - \shadeshift, 0.00) rectangle 
              (\deltax + 6.720*\factorx + \shadeshift, 1.00);

\begin{axis}[scale only axis,
             width  = 2.6*\unit,
             height = 0.8*\unit,
             xmin = -0.5, xmax=7.5,
             xtick = {0.03, -0.01, 1.67, 1.413, 3.77, 3.393, 6.72, 6.048},
             xticklabels = {\red{$\qquad\lambda_1\phantom{\lambda}$},
                            \blue{$\lambda_1\ \ $}, 
                            \red{$\quad\lambda_i\phantom{\lambda}$}, 
                            \blue{$\lambda_i$},
                            \red{$\quad\lambda_j\phantom{\lambda}$}, 
                            \blue{$\lambda_j$},
                            \red{$\quad\lambda_{n}\phantom{\lambda}$},
                            \blue{$\lambda_n$}},
             ymin = -0, ymax = 1.15,
             ytick = {1.15},
             yticklabels = {},
             enlarge x limits=false]

\addplot+[samples at = {0.03, 1.67, 
                        3.77, 6.72}, dashed, 
          color = red!60, 
          ycomb, 
          mark=otimes*, 
          mark options={red!60}]
         {\frequencyresponse};

\addplot+[samples at = {-0.01, 1.413, 
                        3.393, 6.048}, dashed, 
          color = blue!60, 
          ycomb, 
          mark=oplus*, 
          mark options={blue!60}]
         {\frequencyresponse};

\addplot[ domain=-0.5:7.5, 
          samples = 80, 
          color = black,
          line width = 1.2]
         {\frequencyresponse};

\end{axis}
\end{tikzpicture}}

\caption{The frequency response $h(\lambda)$ of a graph filter (black line). The function $h(\lambda)$ is independent of the graph. A specific graph instantiates $h(\lambda)$ on specific eigenvalues of the shift operator. We highlight the latter for two different graphs in red and blue.} 
\label{fig:frequencyResponse}
\end{figure}
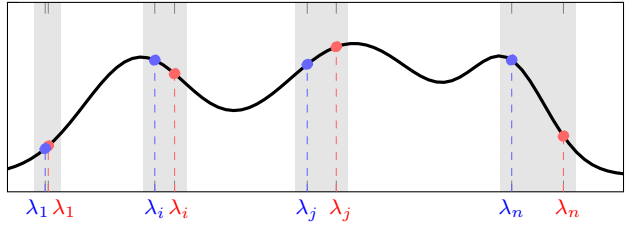

Since the shift operator $\bbS$ is symmetric, it accepts the eigendecomposition $\bbS = \bbV \bbLambda \bbV^\top$ with orthogonal eigenvectors $\bbV = [\bbv_1,\ldots,\bbv_n]$ and eigenvalues $\bbLambda = \text{diag} (\lambda_1,\ldots,\lambda_n)$. The graph Fourier transform (GFT) projects the signal $\bbx$ on the eigenvector basis $\bbV$ as $\bbx = \sum_{i=1}^n \hat{x}_i \bbv_i$, where $\hat{\bbx} = [\hat{x}_1,\ldots,\hat{x}_n]^\top$ collects the graph Fourier coefficients of $\bbx$ \cite{ortega2018}. Substituting this GFT into the graph filter [cf. \eqref{eq:graphFilter}], we have
\begin{equation}\label{eq:FilterGFT}
\bbu = \sum_{k=0}^K h_k \bbS^k \sum_{i=1}^n \hat{x}_i \bbv_i = \sum_{i=1}^n \hat{x}_i \sum_{k=0}^K h_k \lambda_i^k \bbv_i.
\end{equation}
By considering also the GFT of the output $\hat{\bbu} = \sum_{i=1}^n \hat{u}_i \bbv_i$, we can write \eqref{eq:FilterGFT} as $\hat{\bbu} = \bbH(\bbLambda)\hat{\bbx}$ where the diagonal matrix $\bbH(\bbLambda) = \sum_{k=0}^K h_k \bbLambda^k$ contains the filter frequency response on the main diagonal with $i$th diagonal entry $h(\lambda_i) = \sum_{k=0}^K h_k \lambda_i^k$. The latter is the graph filter frequency response evaluated at the eigenvalue $\lambda_i$. Note though that for a defined set of coefficients $\{h_k\}_{k=1}^K$, the filter frequency response is an analytic function of the form
\begin{equation}\label{eq:FrequencyResponse}
h(\lambda) = \sum_{k=0}^K h_k \lambda^k
\end{equation}
for a generic graph frequency variable $\lambda$. That is, it is a universal shape and independent on the underlying graph. A specific graph $(\ccalG, \bbS)$ only instantiates the eigenvalues $\{ \lambda_i \}_{i=1}^n$ on the function variable $\lambda$ in \eqref{eq:FrequencyResponse}, which results in specific filter responses $\{ h(\lambda_i) \}_{i=1}^n$---see Fig. \ref{fig:frequencyResponse}. Thus, if the filter coefficients are learned over a graph $\ccalG_1$ (blue eigenvalues in Fig. \ref{fig:frequencyResponse}) but implemented over another graph $\ccalG_2$ (red eigenvalues in Fig. \ref{fig:frequencyResponse}) that deviates substantially from $\ccalG_1$, the filter will implement an arbitrary different frequency response. One way to mitigate this effect is to work with graph filters that are integral Lipschitz as we recall next.

\begin{figure}[t]
\centering

\def \thisplotscale {3.15}
\def \unit {\thisplotscale cm}

\def \frequencyresponseC 
{ 1.35*exp(-(9.0*(x-0.0))^2) }
\def \frequencyresponseTwoC
{ 1.35*exp(-(6.0*(x-0.4))^2) }
\def \frequencyresponseTwoCV
{ 1.35*exp(-(4.5*(x-0.9))^2) }
\def \frequencyresponseThreeC
{ 1.35*exp(-(2.*(x-1.8))^2) }
\def \frequencyresponseFourC
{ 1.35*exp(-(1.5*(x-3.2))^2) }
\def \frequencyresponseFiveC
{ 1.35 - 1.35*exp(-(1.1*(x-3.2))^2) }

\def \frequencyresponse 
{ 1.15*exp(-(6.0*(x-0.0))^2) }
\def \frequencyresponseTwo
{ 1.15*exp(-(3.0*(x-0.6))^2) }
\def \frequencyresponseThree
{ 1.15*exp(-(1.5*(x-1.8))^2) }
\def \frequencyresponseFour
{ 1.15*exp(-(1.0*(x-3.6))^2) }
\def \frequencyresponseFive
{ 1.15 - 1.15*exp(-(0.7*(x-3.0))^2) }

\begin{tikzpicture}[x = 1*\unit, y=0.8*\unit]

\def \factorx {2.6/8}
\def \deltax  {0.5*\factorx}
\def \shadeshift  {0.05}

\path [fill=black!20, opacity = 0.5] 
              (\deltax - 0.001*\factorx - \shadeshift, 0.00) rectangle 
              (\deltax + 0.030*\factorx + \shadeshift, 1.00);
\path [fill=black!20, opacity = 0.5] 
              (\deltax + 1.413*\factorx - \shadeshift, 0.00) rectangle 
              (\deltax + 1.67*\factorx + \shadeshift, 1.00);
\path [fill=black!20, opacity = 0.5] 
              (\deltax + 3.393*\factorx - \shadeshift, 0.00) rectangle 
              (\deltax + 3.770*\factorx + \shadeshift, 1.00);
\path [fill=black!20, opacity = 0.5] 
              (\deltax + 6.048*\factorx - \shadeshift, 0.00) rectangle 
              (\deltax + 6.720*\factorx + \shadeshift, 1.00);

\begin{axis}[scale only axis,
             width  = 2.6*\unit,
             height = 0.8*\unit,
             xmin = -0.5, xmax=7.5,
             xtick = {0.03, -0.01, 1.67, 1.413, 3.77, 3.393, 6.72, 6.048},
             xticklabels = {\red{$\qquad\lambda_1\phantom{\lambda}$},
                            \blue{$\lambda_1\ \ $}, 
                            \red{$\quad\lambda_i\phantom{\lambda}$}, 
                            \blue{$\lambda_i$},
                            \red{$\quad\lambda_j\phantom{\lambda}$}, 
                            \blue{$\lambda_j$},
                            \red{$\quad\lambda_{n}\phantom{\lambda}$},
                            \blue{$\lambda_n$}},
             ymin = -0, ymax = 1.4,
             ytick = {1.4},
             yticklabels = {}]

\addplot[domain=-0.5:0.6,samples = 120, color = black, line width=1.2] {\frequencyresponse};
\addplot[domain= 0.3:3.2,samples = 80, color = black, line width=1.2] {\frequencyresponseThree};
\addplot[domain= 3.0:7.5,samples = 80, color = black, line width=1.2] {\frequencyresponseFive};

\addplot+[samples at = {0.03}, dashed, 
          color = red!60, 
          ycomb, 
          mark=otimes*, 
          mark options={red!60}]
         {\frequencyresponse};

\addplot+[samples at = {-0.01}, dashed,
          color = blue!60, 
          ycomb, 
          mark=oplus*, 
          mark options={blue!60}]
         {\frequencyresponse};

\addplot+[samples at = {1.67}, dashed, 
          color = red!60, 
          ycomb, 
          mark=otimes*, 
          mark options={red!60}]
         {\frequencyresponseThree};

\addplot+[samples at = {1.413}, dashed, 
          color = blue!60, 
          ycomb, 
          mark=oplus*, 
          mark options={blue!60}]
         {\frequencyresponseThree};
         
\addplot+[samples at = {3.77}, dashed, 
          color = red!60, 
          ycomb, 
          mark=otimes*, 
          mark options={red!60}]
         {\frequencyresponseFive};

\addplot+[samples at = {3.393}, dashed, 
          color = blue!60, 
          ycomb, 
          mark=oplus*, 
          mark options={blue!60}]
         {\frequencyresponseFive};
         
\addplot+[samples at = {6.72}, dashed, 
          color = red!60, 
          ycomb, 
          mark=otimes*, 
          mark options={red!60}]
         {\frequencyresponseFive};

\addplot+[samples at = {6.048}, dashed, 
          color = blue!60, 
          ycomb, 
          mark=oplus*, 
          mark options={blue!60}]
         {\frequencyresponseFive};

\end{axis}
\end{tikzpicture}
\caption{The frequency response $h(\lambda)$ of a integral Lipschitz filter (black line). The change of eigenvalue $\lambda$ leads to a bounded change of frequency response $h(\lambda)$. This behavior is emphasized for large-value frequencies, where $h(\lambda)$ is nearly constant.} 
\label{fig:IntegralfrequencyResponse}
\end{figure}
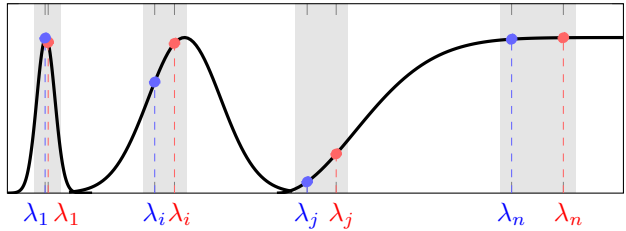

\begin{definition}[Integral Lipschitz filter]
Consider a graph filter with the frequency response $h(\lambda)$ [cf. \eqref{eq:FrequencyResponse}] satisfying $|h(\lambda)|\le 1$. The filter is integral Lipschitz if for any frequency $\lambda$ in a finite space $\Lambda$, there exists a constant $C_L>0$ such that
\begin{equation}\label{eq:IntegralFrequencyResponse}
|h'(\lambda)| \le C_L~~\text{and}~~ |\lambda h'(\lambda)| \le C_L 
\end{equation}
where $h'(\lambda)$ is the derivative of $h(\lambda)$.
\end{definition}
\noindent An integral Lipschitz filter has basically a restricted variability in its frequency response, i.e., function $h(\lambda)$ does not change faster than linear and tends to flatten for large $\lambda$---see Fig. \ref{fig:IntegralfrequencyResponse}. This flat response in the high-eigenvalue region --where the deviation in the topology is more emphasized in the spectrum-- makes the filter more robust to topological perturbations. However, this robustness comes at expenses of the discriminatory power. Integral Lipschitz filters cannot discriminate between spectral features in large-value frequencies \cite{gama2020stability}. Constant $C_L$ controls the variability of $h(\lambda)$ to change the trade-off between robustness and discriminability.

The integral Lipschitz property has been shown useful to characterize the filter robustness when run over a deterministic perturbed shift operator $\bbS+\Delta \bbS$ but not over a sequence of random shift operators [cf. \eqref{eq:randomGraphFilter}]. To characterize the filter robustness in the latter case, we need to generalize the integral Lipschitz property to graph filters over a sequence of random graphs. For this, we first generalize the frequency response of a filter over a sequence of random graphs. 

\textbf{Generalized frequency response.} Let $\bbS_1,\ldots,\bbS_K$ be the shift operators of the RES($\ccalG,p$) graphs  $\ccalG_1,\ldots,\ccalG_K$, respectively. Let also $\bbS_k = \bbV_k \bbLambda_k \bbV_k^\top$ be the eigendecomposition of the $k$th shift operator with eigenvectors $\bbV_k = [\bbv_{k1}, \ldots, \bbv_{kn}]$ and eigenvalues $\bbLambda_k = \text{diag}(\lambda_{k1}, \ldots, \lambda_{kn})$. As it follows from \eqref{eq:randomGraphFilter}, computing the filter output $\tilde{\bbu}$ requires computing the shifted signals $\bbx^{(k)} = \bbS_k\cdots \bbS_1\bbx$. From the eigendecomposition $\bbS_1 = \bbV_1 \bbLambda_1 \bbV_1^\top$, we can write the signal as $\bbx = \sum_{i_1=1}^n \hat{x}_{1i_1} \bbv_{1i_1}$ where $\hat{\bbx}_1 = [\hat{x}_{11}, \ldots, \hat{x}_{1n}]^\top$ are the graph Fourier coefficients of $\bbx$ over $\bbS_1$. Then, the one-shifted signal $\bbx^{(1)} = \bbS_1 \bbx$ can be written as
\begin{equation}\label{eq:GeneralizedFrequencyResponse1}
\bbx^{(1)} = \bbS_1 \bbx = \bbS_1 \sum_{i_1=1}^n \hat{x}_{1i_1} \bbv_{1i_1} = \sum_{i_1=1}^n \hat{x}_{1i_1} \lambda_{1i_1} \bbv_{1i_1}.
\end{equation}
For the two-shifted signal $\bbx^{(2)} = \bbS_2 \bbS_1 \bbx$, we treat each eigenvector $\bbv_{1i_1}$ in \eqref{eq:GeneralizedFrequencyResponse1} as an intermediate graph signal and further decompose it with the GFT over $\bbS_2$. That is, we write $\bbv_{1 i_1} = \sum_{i_2=1}^n \hat{x}_{2 i_1 i_2} \bbv_{2 i_2}$ with $\hat{\bbx}_{2i_1} = [\hat{x}_{2i_11}, \ldots, \hat{x}_{2i_1n}]^\top$ being the graph Fourier coefficients of $\bbv_{1 i_1}$ over $\bbS_2$. Using these GFTs for all the eigenvectors $\bbv_{11},\ldots,\bbv_{1n}$ and \eqref{eq:GeneralizedFrequencyResponse1}, we can represent the two-shifted signal $\bbx^{(2)}$ as
\begin{equation}\label{eq:GeneralizedFrequencyResponse2}
\bbx^{(2)} \!=\! \bbS_2 \sum_{i_1=1}^n \hat{x}_{1i_1} \lambda_{1i_1} \bbv_{1i_1} \!=\! \sum_{i_2=1}^n \sum_{i_1=1}^n \hat{x}_{2i_1i_2}\hat{x}_{1i_1}\lambda_{2 i_2} \lambda_{1i_1} \bbv_{1i_1}.
\end{equation}
The coefficients $\{ \hat{x}_{1i_1} \}_{i_1=1}^n$ and $\{ \hat{x}_{2i_1i_2} \}_{i_1i_2}$ characterize the generalized GFT of $\bbx$ over a chain of two shift operators $\bbS_2 \bbS_1$. Proceeding in a recursive way, we can generalize the GFT to the $k$-shifted signal $\bbx^{(k)} = \bbS_k \cdots \bbS_1 \bbx$ as
\begin{equation}\label{eq:GeneralizedFrequencyResponsek}
\bbx^{(k)} \!=\! \sum_{i_k=1}^n \cdots \sum_{i_1=1}^n \hat{x}_{ki_{k-1}i_k}\cdots  \hat{x}_{2i_1i_2}\hat{x}_{1i_1} \prod_{j=1}^k \lambda_{ji_j} \bbv_{ki_k}
\end{equation}
with coefficients $\{ \hat{x}_{1i_1} \}_{i_1=1}^n$ and $\{ \hat{x}_{2i_ji_{j+1}} \}_{j=1}^{k-1}$. Note that representation \eqref{eq:GeneralizedFrequencyResponsek} depends on all eigenvalues $\bbLambda_k, ..., \bbLambda_1$ and eigenvectors matrices $\bbV_k, \ldots, \bbV_1$ of the $k$ successive shift operators $\bbS_k, \ldots, \bbS_1$. Aggregating the $K+1$ shifted signals $\{ \bbx^{(k)} \}_{k=0}^K$ [cf. \eqref{eq:GeneralizedFrequencyResponsek}], the generalized GFT of the filter output over $K$ random graphs [cf. \eqref{eq:randomGraphFilter}] becomes
\begin{equation}\label{eq:GeneralizedFrequencyResponseFilter}
\hat{\tilde{\bbu}} \!=\! \sum_{i_K=1}^n \!\cdots\! \sum_{i_1=1}^n \hat{x}_{Ki_{K-1}i_K}\cdots  \hat{x}_{2i_1i_2}\hat{x}_{1i_1} \sum_{k=1}^K h_k \prod_{j=0}^k \lambda_{ji_j} \bbv_{Ki_k}
\end{equation}
where $\{ \hat{x}_{1i_1} \}_{i_1=1}^n$ and $\{ \hat{x}_{2i_ji_{j+1}} \}_{j=1}^{K-1}$ are the generalized GFT of $\bbx$ over a chain of $K$ shift operators $\bbS_K \cdots \bbS_1$ and $\lambda_{0i_0}=1$ since $\bbS_0 = \bbI$. With this in place, we formalize next the generalized frequency response of a filter over a sequence of random graphs.

\begin{definition}[Generalized graph filter frequency response]\label{def:IntegralLIpschitz}
Consider a graph filter defined by coefficients $\{ h_k \}_{k=0}^K$ run over a sequence of $K$ shift operators [cf. \eqref{eq:randomGraphFilter}]. The generalized frequency response of the filter is the $K$-dimensional analytic function
\begin{equation}\label{eq:GeneralizedFrequencyResponse}
h(\bblambda) = \sum_{k=0}^K h_k \prod_{\kappa=0}^k \lambda_{\kappa}
\end{equation}
for a generic vector variable $\bblambda = [\lambda_{1},\ldots,\lambda_{K}]^\top \in \mathbb{R}^K$, where $\lambda_0=1$ by default.
\end{definition}
\noindent The generalized frequency response $h(\bblambda)$ is a multivariate function, in which the $k$th entry $\lambda_{k}$ is the analytic frequency variable corresponding to the shift operator $\bbS_k$\footnote{This is in analogy to \eqref{eq:FrequencyResponse} for the single shift operator case. We will use subscript $k$ to indicate that $\lambda_k$ is an analytic frequency variable associated to the shift operartor $\bbS_k$ [cf. \eqref{eq:GeneralizedFrequencyResponse}] and subscript $i$ to indicate that $\lambda_{ki}$ is the specific $i$th eigenvalue of a specific shift operaror $\bbS_k$ [cf. \eqref{eq:FilterGFT}]. For clarity, we will also associate to $\lambda_k$ or $\lambda_{ki}$ the terminologies \emph{analytic} and \emph{specific}, respectively.}. The multidimensional shape of the generalized frequency response $h(\bblambda)$ is determined entirely by the filter coefficients $\{ h_k \}_{k=0}^K$, while a specific chain of shift operators $\{ \bbS_k\}_{k=1}^K$ only specifies the $K$-dimensional frequency vector variable $\bblambda$---see Fig. \ref{fig:GeneralizedFrequencyResponse} for $K=2$. From this perspective, results built upon the generalized frequency response hold uniformly for any specific graph. The following remark concludes this section.

\begin{remark}[Deterministic frequency response]
When the link sampling probability is $p = 1$, all RES($\ccalG,p$) subgraph realizations are identical to the underlying graph $\ccalG$. Thus, we have $\bbS_1 = \cdots = \bbS_K = \bbS$ and the generalized GFT in \eqref{eq:GeneralizedFrequencyResponseFilter} reduces to the GFT in \eqref{eq:FilterGFT}; the multivariate generalized frequency response $h(\bblambda)$ in \eqref{eq:GeneralizedFrequencyResponse}] reduces to the univariate frequency response $h(\lambda)$ in \eqref{eq:FrequencyResponse}.
\end{remark}

\begin{figure}[t]
\centering
\includegraphics[width=0.85\linewidth , height=0.55\linewidth, trim=10 10 10 10]{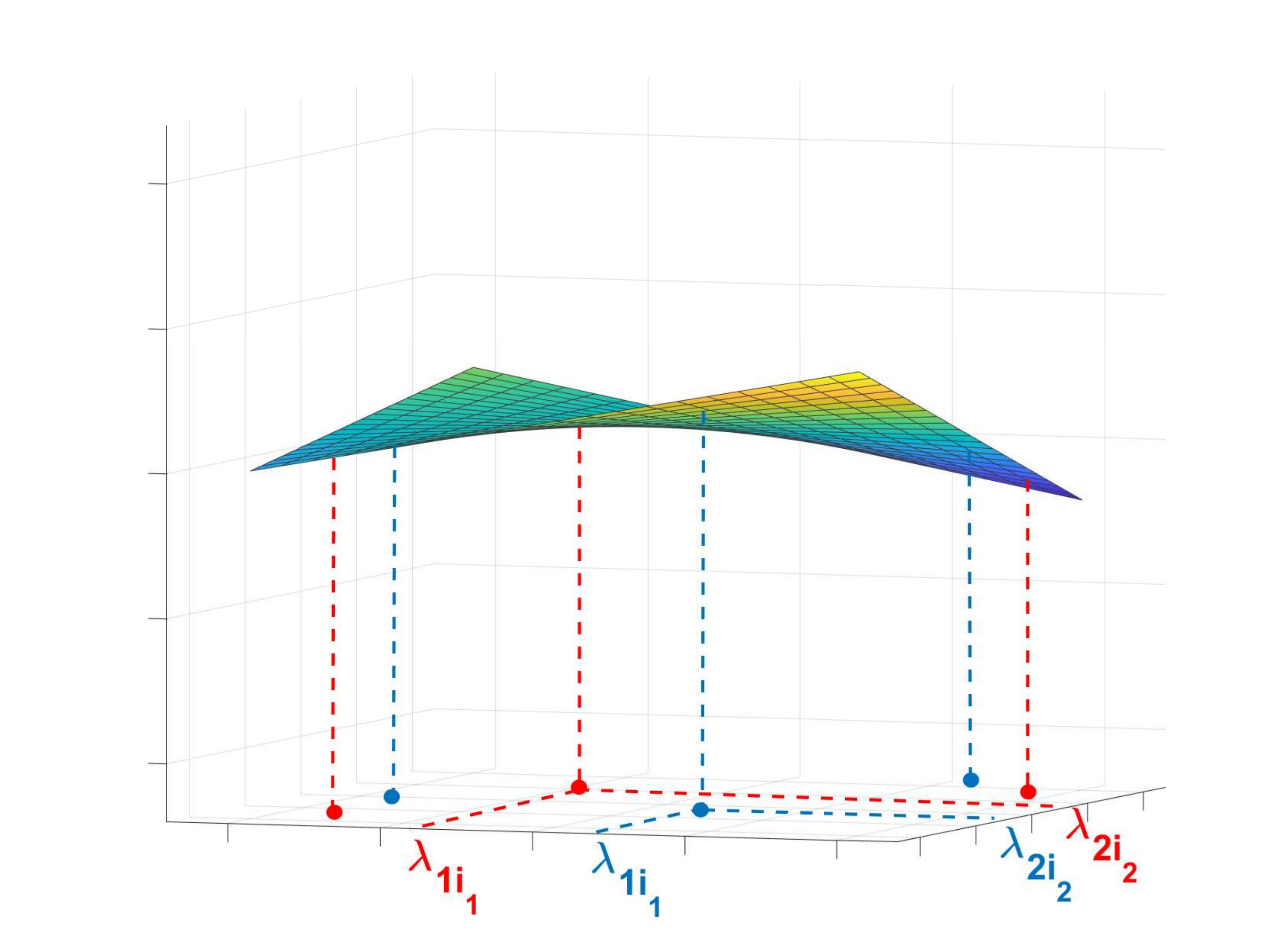}
\caption{The two-dimensional generalized frequency response of a graph filter (black line). The function $h(\bblambda)$ is independent of random graph realizations. For two specific chains of random graph realizations $\bbS_2\bbS_1$ (red) and $\bbS_2^\prime\bbS_1^\prime$ (blue), $h(\bblambda)$ is instantiated on specific multivariate frequencies. }
\label{fig:GeneralizedFrequencyResponse}
\end{figure}

\subsection{Stochastic Perturbations on Graph Filters}\label{subsec:FilterStability}

We now define the generalized integral Lipschitz property for the generalized frequency response, likewise in Def. \ref{def:IntegralLIpschitz}. For this, we first define the Lipschitz gradient.

\begin{definition}[Lipschitz gradient]\label{def:LipschitzGradient}
Consider the analytic generalized frequency response $h(\bblambda)$ with coefficients $\{ h_k \}_{k=0}^K$ [cf. \eqref{eq:GeneralizedFrequencyResponse}]. Consider also two specific instantiations $\bblambda_1 = [\lambda_{11}, \ldots, \lambda_{1K}]^\top$ and $\bblambda_2 = [\lambda_{21}, \ldots, \lambda_{2K}]^\top$ of the analytic multivariate frequency vector variable $\bblambda$. Further, let $\bblambda^{(k)} = [\lambda_{11}, \ldots, \lambda_{1k}, \lambda_{2(k+1)}, \ldots, \lambda_{2K}]^\top$ be the vector formed by concatenating the first $k$ entries of $\bblambda_1$ and the last $K-k$ entries of $\bblambda_2$. The Lipschitz gradient of $h(\bblambda)$ between $\bblambda_1$ and $\bblambda_2$ is
\begin{equation}\label{eq:LipschitzGradient}
\nabla_L h(\bblambda_1, \bblambda_2) = \Big[\frac{\partial h(\bblambda^{(1)})}{\partial \lambda_1}, \ldots, \frac{\partial h(\bblambda^{(K)})}{\partial \lambda_K}\Big]^\top
\end{equation}
where $\partial h(\bblambda^{(k)})/\partial \lambda_k$ is the partial derivative w.r.t. the $k$-th entry $\lambda_k$ at $\bblambda^{(k)}$.
\end{definition}

\noindent The Lipschitz gradient $\nabla_L h(\bblambda_1, \bblambda_2)$ characterizes the variability of the analytic function $h(\bblambda)$ when specified at two multivariate frequencies $\bblambda_1$ and $\bblambda_2$. We illustrate the latter more explicitly with the following lemma.

\begin{lemma}\label{lemma:LipschitzGradient}
Consider the analytic generalized frequency response $h(\bblambda)$ with coefficients $\{ h_k \}_{k=0}^K$ [cf. (13)]. Let $\bblambda_1 = [\lambda_{11}, \ldots, \lambda_{1K}]^\top$ and $\bblambda_2 = [\lambda_{21}, \ldots, \lambda_{2K}]^\top$ be two multivariate frequency vectors, and $\nabla_L h(\bblambda_1, \bblambda_2)$ be the Lipschitz gradient of $h(\bblambda)$ between $\bblambda_1$ and $\bblambda_2$ [cf. Def. 1]. Then, it holds that
\begin{equation}\label{eq:LipschitzGradientCondition}
h(\bblambda_1) - h(\bblambda_2) = \nabla_L^\top h(\bblambda_1, \bblambda_2) (\bblambda_1 - \bblambda_2).
\end{equation}
\end{lemma}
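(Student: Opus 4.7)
The plan is a telescoping identity along the sequence of hybrid vectors $\bblambda^{(0)},\ldots,\bblambda^{(K)}$, exploiting the crucial fact that the generalized frequency response $h(\bblambda)=\sum_{k=0}^{K}h_{k}\prod_{\kappa=0}^{k}\lambda_{\kappa}$ is \emph{multilinear} in $\bblambda$, i.e.\ linear in each coordinate $\lambda_j$ separately. This is the structural property that makes the finite-difference identity \eqref{eq:LipschitzGradientCondition} exact rather than merely a mean-value bound.

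First I would observe that, by Def.~\ref{def:LipschitzGradient}, the extremal hybrids coincide with the endpoints: $\bblambda^{(K)}=\bblambda_1$ and $\bblambda^{(0)}=\bblambda_2$. I can therefore write
\begin{equation*}
h(\bblambda_1)-h(\bblambda_2) \;=\; \sum_{k=1}^{K}\bigl[h(\bblambda^{(k)})-h(\bblambda^{(k-1)})\bigr].
\end{equation*}
Next I would observe that, by construction, $\bblambda^{(k)}$ and $\bblambda^{(k-1)}$ differ only in coordinate $k$: the former has $\lambda_{1k}$ in that slot while the latter has $\lambda_{2k}$; all other coordinates agree.

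The core step is the linearity of $h$ in coordinate $\lambda_k$, which follows immediately from \eqref{eq:GeneralizedFrequencyResponse} because every term $\prod_{\kappa=0}^{k}\lambda_{\kappa}$ contains $\lambda_k$ at most to the first power. Consequently, for any fixed values of the remaining coordinates, the map $\lambda_k\mapsto h(\cdot,\lambda_k,\cdot)$ is affine, so
\begin{equation*}
h(\bblambda^{(k)})-h(\bblambda^{(k-1)}) \;=\; \frac{\partial h(\bblambda^{(k)})}{\partial \lambda_k}\bigl(\lambda_{1k}-\lambda_{2k}\bigr),
\end{equation*}
and the partial derivative is independent of whether one evaluates at $\bblambda^{(k)}$ or $\bblambda^{(k-1)}$ (it only depends on the frozen coordinates). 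Summing over $k=1,\ldots,K$ and recognizing $[\partial h(\bblambda^{(1)})/\partial\lambda_1,\ldots,\partial h(\bblambda^{(K)})/\partial\lambda_K]^\top=\nabla_L h(\bblambda_1,\bblambda_2)$ and $(\lambda_{1k}-\lambda_{2k})_k = \bblambda_1-\bblambda_2$ yields exactly \eqref{eq:LipschitzGradientCondition}.

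There is no real obstacle here: no mean-value theorem remainder is required because multilinearity upgrades a first-order Taylor expansion in each coordinate to an exact identity. The only thing that deserves a sentence of care is verifying that the partial derivative in the telescoping increment can be taken at $\bblambda^{(k)}$ (as stated in Def.~\ref{def:LipschitzGradient}) rather than at any other point on the segment joining $\bblambda^{(k-1)}$ and $\bblambda^{(k)}$; this is precisely where the multilinearity of $h$ is invoked.
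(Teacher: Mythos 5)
Your proof is correct and rests on the same underlying identity as the paper's: both arguments exploit that each $\lambda_k$ enters $h(\bblambda)$ linearly, so the telescoping of products $\prod_\kappa\lambda_{1\kappa}-\prod_\kappa\lambda_{2\kappa}$ along the hybrid vectors $\bblambda^{(0)},\ldots,\bblambda^{(K)}$ is exact. The paper merely runs the computation in the opposite direction (expanding $\nabla_L^\top h(\bblambda_1,\bblambda_2)(\bblambda_1-\bblambda_2)$ and regrouping the terms by filter coefficient $h_k$), whereas you telescope $h(\bblambda_1)-h(\bblambda_2)$ coordinate by coordinate and invoke multilinearity; the two presentations are the same double sum read in different orders.
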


\begin{proof}
See \ref{proof:lemmaLischitz}.
\end{proof}

With above preliminaries, we define the generalized integral Lipschitz filter.

\begin{definition}[Generalized integral Lipschitz filter]\label{def:GeneralizedIntegralLIpschitz}
Consider a graph filter run over random graphs with the analytic generalized frequency response $h(\bblambda)$ in \eqref{eq:GeneralizedFrequencyResponse} satisfying $|h(\bblambda)| \le 1$. The filter is generalized integral Lipschitz if for any specific $K$-dimensional frequency vectors $\bblambda_1$, $\bblambda_2$ in a finite space $\Lambda^K$, there exists a constant $C_L >0$ such that
\begin{equation}\label{eq:GeneralizedIntegralLipschitzFilter}
\| \nabla_L h(\bblambda_1, \bblambda_2) \|_2 \!\le\! C_L~~\text{and}~~\| \bblambda_1 \!\odot\! \nabla_L h(\bblambda_1, \bblambda_2) \|_2 \!\le\! C_L
\end{equation}
where $\nabla_L h(\bblambda_1, \bblambda_2)$ is the Lipschitz gradient [Def. \ref{def:LipschitzGradient}] and $\odot$ is the elementwise product of vectors.
\end{definition}
\noindent In essence, Def. \ref{def:GeneralizedIntegralLIpschitz} implies that the generalized integral Lipschitz filter $h(\bblambda)$ does not change faster than linear in the $K$-dimensional space. That is, analogous to the Lipschitz continuity of the one-dimensional function $f(x)$, i.e., $|f(y) - f(x)|/|y-x| \le C_L~\forall~x, y \in \mathbb{R}$, Def. \ref{def:GeneralizedIntegralLIpschitz} yields 
\begin{align}\label{eq:LipschitzIndication}
|h(\bblambda_1) - h(\bblambda_2)| &= |\nabla_L^\top h(\bblambda_1, \bblambda_2) (\bblambda_1 - \bblambda_2)| \\
&\le \|\nabla_L h(\bblambda_1, \bblambda_2)\|_2 \|\bblambda_1 - \bblambda_2\|_2 \nonumber \le C_L \|\bblambda_1 - \bblambda_2\|_2
\end{align}
for all $\bblambda_1, \bblambda_2 \in \mathbb{R}^K$, where Lemma \ref{lemma:LipschitzGradient} is used in the first equality, the triangular inequality is used in the second inequality, and the Lipschitz condition [cf. \eqref{eq:GeneralizedIntegralLipschitzFilter}] is used in the last inequality. Def. \ref{def:GeneralizedIntegralLIpschitz} says also that the generalized frequency response $h(\bblambda)$ changes more slowly when the analytic variable $\bblambda$ is specified at large values; i.e., different instantiations $\bblambda_1$ and $\bblambda_2$ may yield similar frequency responses $h(\bblambda_1)$ and $h(\bblambda_2)$. Note also that Def. \ref{def:GeneralizedIntegralLIpschitz} generalizes Def. \ref{def:IntegralLIpschitz} to the multivariate case\footnote{Particularizing $\bbS_1=\ldots = \bbS_K = \bbS$ to the deterministic underlying shift operator, condition \eqref{eq:GeneralizedIntegralLipschitzFilter} reduces to condition \eqref{eq:IntegralFrequencyResponse} and the generalized integral Lipschitz filter Def. \ref{def:GeneralizedIntegralLIpschitz} reduces to the integral Lipschitz filter Def. \ref{def:IntegralLIpschitz}.}.

Upon defining the generalized integral Lipschitz filter, we can quantify the stability of the graph filter to stochastic perturbations.

\begin{theorem}\label{theorem:filterStability}
Consider the graph filter $\bbH(\bbS)$ [cf. \eqref{eq:graphFilter}] with underlying shift operator $\bbS$ and filter coefficients $\{ h_k \}_{k=0}^K$. Consider also the filter $\tilde{\bbH}(\bbS)$ run over RES($\ccalG, p$) subgraph realizations $\bbS_k$ for $k=1,\ldots,K$ [cf. \eqref{eq:randomGraphFilter}]. Let the filter be generalized integral Lipschitz with constant $C_L$ [Def. \ref{def:GeneralizedIntegralLIpschitz}]. Then, for any graph signal $\bbx$, the expected difference between the filter output run over random graphs $\tilde{\bbH}(\bbS)\bbx$ and that run over the nominal graph $\bbH(\bbS)\bbx$ is upper bounded as
 \begin{equation}\label{eq:FilterStability}
\mathbb{E}\left[\| \tilde{\bbH}(\bbS)\bbx \!-\! \bbH(\bbS)\bbx \|^2_2\right] \!\le\! C (1\!-\!p) \| \bbx \|^2_2 \!+\! \ccalO\big((1\!-\!p)^2\big)
\end{equation}
where $C = n \alpha C_L^2$ is the stability constant and scalar $\alpha$ is either the maximal node degree or $2$ depending on the shift operator [cf. Lemma \ref{lemma:traceOperation}].
\end{theorem}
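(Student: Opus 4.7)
The plan is to work entirely in the generalized spectral domain so the bound can be claimed uniformly in the underlying graph. I begin by rewriting the error as
\begin{equation*}
\tilde\bbH(\bbS)\bbx - \bbH(\bbS)\bbx \;=\; \sum_{k=1}^{K} h_k \bigl(\bbS_k\cdots\bbS_1 - \bbS^k\bigr)\bbx,
\end{equation*}
since the $k=0$ term is zero ($\bbS_0=\bbI$). Next, I expand each shifted signal using the generalized GFT in \eqref{eq:GeneralizedFrequencyResponsek}: for the random side, the $k$-shifted signal is indexed by the chain of eigenvalues $\bblambda^{\rm rand}_{i_1\ldots i_k}=[\lambda_{1i_1},\ldots,\lambda_{ki_k}]^{\top}$ of $\bbS_1,\ldots,\bbS_k$; for the nominal side, $\bbS^k\bbx$ decomposes along the eigenbasis of $\bbS$ at the constant chain $\bblambda^{\rm nom}_{i}=\lambda_i\bbone_K$. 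In both cases the filter acts by multiplication with the generalized frequency response $h(\bblambda)$ of Definition \ref{def:IntegralLIpschitz}, so the scalar spectral difference that drives the error is exactly $h(\bblambda^{\rm rand})-h(\bblambda^{\rm nom})$.

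The next step is to invoke Lemma \ref{lemma:LipschitzGradient} together with Definition \ref{def:GeneralizedIntegralLIpschitz}: the spectral difference is bounded, by Cauchy--Schwarz, by $C_L\,\|\bblambda^{\rm rand}-\bblambda^{\rm nom}\|_2$, and the second condition $\|\bblambda\odot\nabla_L h\|_2\le C_L$ will be used to absorb the bias contribution that arises because $\mathbb{E}[\bbS_k]=p\bbS$ rather than $\bbS$ (i.e.\ it controls $|h(p\lambda)-h(\lambda)|$ uniformly in $\lambda$). After substituting this Lipschitz bound back into the spectral expansion of the error, Parseval together with orthonormality of the eigenbases reduces the task to controlling the expected squared eigenvalue deviations between the chain $(\bbS_1,\ldots,\bbS_K)$ and the nominal $\bbS$, against the energy $\|\bbx\|_2^2$.

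I then translate eigenvalue deviations into matrix perturbation norms by Weyl's inequality (entry-wise for individual $|\lambda_{ki}-\lambda_i|$ and summed via $\sum_i(\lambda_{ki}-\lambda_i)^2\le\|\bbS_k-\bbS\|_F^2$). Under the RES$(\ccalG,p)$ model each retained entry of $\bbS$ is independently multiplied by a Bernoulli$(p)$ variable, so a routine moment computation gives $\mathbb{E}[\|\bbS_k-\bbS\|_F^2]\le (1-p)\,\tr(\bbS^2 D)$ (adjacency case) or an analogous Laplacian expression---this is precisely the trace bound of Lemma \ref{lemma:traceOperation} that produces the factor $n\alpha$ with $\alpha$ equal to the maximum degree (adjacency) or $2$ (Laplacian). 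Combining yields the linear-in-$(1-p)$ leading term with constant $C=n\alpha C_L^2$.

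The principal obstacle is twofold. First, the eigenvectors of $\bbS_k$ differ from those of $\bbS$, so the generalized GFT coefficients do not collapse cleanly onto a single basis; I plan to dodge this by carrying the computation inside the generalized frequency response rather than trying to diagonalize in a single basis, using orthonormality of each $\bbV_k$ to bound cross terms. Second, and more delicate, is separating the $\ccalO(1-p)$ contribution from higher-order ones along the chain of $K$ random shifts: products of independent Bernoulli perturbations produce terms of order $(1-p)^j$ for $j=1,\ldots,K$, and the $j\ge 2$ contributions must all be absorbed into the $\ccalO((1-p)^2)$ remainder. I would control this by Taylor-expanding the chain $\bbS_k\cdots\bbS_1$ around $\bbS^k$ and retaining only the $K$ single-perturbation telescoping terms at first order, with the remainder bounded crudely by the operator norm $\|h\|_\infty\le 1$ coming from the filter normalization.
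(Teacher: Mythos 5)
Your proposal has a genuine gap at its central step, and the paper's proof is structured quite differently precisely to avoid it. You claim that after expanding both outputs via the generalized GFT, ``the scalar spectral difference that drives the error is exactly $h(\bblambda^{\rm rand})-h(\bblambda^{\rm nom})$,'' and that Parseval then reduces everything to eigenvalue deviations. But the random output \eqref{eq:GeneralizedFrequencyResponseFilter} is expressed through a chain of \emph{different} eigenbases $\bbV_1,\ldots,\bbV_K$ coupled by the cross-coefficients $\hat{x}_{ki_{k-1}i_k}$, while the nominal output lives in the single basis $\bbV$ of $\bbS$; their difference is not a diagonal multiplication in any common orthonormal basis, so Parseval does not apply to it and the eigenvector rotation is left completely uncontrolled. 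Weyl/Hoffman--Wielandt bounds only the eigenvalue part; even with identical spectra the two outputs would differ through the eigenvectors. You name this obstacle yourself, but the proposed dodge (``using orthonormality of each $\bbV_k$ to bound cross terms'') is not an argument. The paper sidesteps the issue entirely: it stays in the vertex domain, writes $\bbS_k=\bbS+\bbE_k$, expands $\mathbb{E}[\|\tilde\bbu-\bbu\|_2^2]$ via the trace, and only diagonalizes with respect to the nominal $\bbS$ in the one place where the perturbation enters through $\mathbb{E}[\bbE_r^2]$ (Lemma~\ref{lemma:traceOperation}).

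There are two further problems. First, the leading $\ccalO(1-p)$ term in the mean-square error is a \emph{variance} contribution, coming from $\mathbb{E}[\bbE_r^2]=(1-p)^2\bbS^2+\beta p(1-p)\bbE$; the bias you emphasize ($\mathbb{E}[\bbS_k]=p\bbS$, controlled via $|h(p\bblambda)-h(\bblambda)|$) contributes only $\ccalO((1-p)^2)$ to the squared norm. Moreover the first-order-in-$\bbE_r$ terms do not vanish in expectation ($\mathbb{E}[\bbE_r]=-(1-p)\bbS$); in the paper they cancel \emph{exactly} against the cross term $-2\mathbb{E}[\tilde\bbu^\top\bbu]$ in the expansion of the squared norm [cf.~\eqref{proof:thm15}], a structural cancellation your telescoping-plus-crude-remainder plan does not reproduce and which is needed to keep them out of the leading constant. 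Second, even granting an eigenvalue-only reduction, applying $|h(\bblambda_1)-h(\bblambda_2)|\le C_L\|\bblambda_1-\bblambda_2\|_2$ per spectral mode and summing the deviations over the $K$ independent shifts yields a bound carrying an extra factor of $K$ (and a mode-matching ambiguity) that is absent from $C=n\alpha C_L^2$. The paper avoids this because in Lemma~\ref{lemma:traceOperation} the entire double sum over filter taps and shift positions is identified in one stroke with $\|\bblambda_j\odot\nabla_L h(\bblambda_j,\bblambda_i)\|_2^2\le C_L^2$, so the sum over the $K$ positions is already absorbed into the single constant $C_L$. As written, your argument would not produce the stated constant, and its key reduction step fails.
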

\begin{proof}
See \ref{proof:theorem1}.
\end{proof}

Theorem \ref{theorem:filterStability} states that graph filters are Lipschitz stable to stochastic perturbations. That is, the expected difference of the filter output when run over the nominal graph $\ccalG$ and over RES($\ccalG,p$) realizations $\{\ccalG_k\}_{k=1}^K$ is upper bounded linearly by the link loss probability $1-p$ w.r.t. a stability constant $C$. When the RES($\ccalG,p$) realizations are almost stable (i.e., $p \to 1$), the bound approaches zero and the graph filter maintains its performance. The stability constant $C$ embeds the role of the filter and the graph: $C_L$ is the generalized Lipschitz constant [cf. \eqref{eq:IntegralFrequencyResponse}] and $\alpha$ depends on the graph shift operator. For instance $\alpha$ is the maximum degree of $\ccalG$ if $\bbS$ is the adjacency matrix and $\alpha = 2$ if $\bbS$ is the Laplacian matrix \cite{Zhan2020}. Therefore, we can affect the filter robustness through constant $C$ by designing the filter coefficients $\{ h_k \}_{k=0}^K$ to lower the generalized Lipschitz constant $C_L$ in \eqref{eq:GeneralizedIntegralLipschitzFilter}.

\section{Stability of Graph Convolutional Neural Networks}\label{sec:GCNNStability}

The stability analysis of GCNNs to stochastic perturbations is more challenging than that of graph filters because of the nonlinearities. One typical approach to handle the latter is to consider nonlinearities that are Lipschitz as stated by the following definition \cite{kaszkurewicz1995comments}.

\begin{definition}[Lipschitz nonlinearity]\label{def:LipschitzNonlinearity}
The nonlinearity $\sigma(\cdot)$ satisfying $\sigma(0)=0$ is Lipschitz if there exists a constant $C_\sigma > 0$ such that for any $a, b \in \mathbb{R}$, it holds that
 \begin{equation}\label{eq:LipschitzNonlinear}
|\sigma(a) - \sigma(b)| \le C_\sigma|a - b|.
\end{equation}
\end{definition}
\noindent The typical nonlinearities used in the GCNN (e.g., ReLU, absolute value, hyperbolic tangent) satisfy Def. \ref{def:LipschitzNonlinearity}. 

The following theorem formally states the stability of the GCNN to stochastic perturbations.

\begin{theorem} \label{theorem:GNNstability}
Consider the GCNN $\bbPhi(\bbx; \bbS, \ccalH)$ of $L$ layers and $F$ features per layer with underlying shift operator $\bbS$ and parameters $\ccalH$ [cf. \eqref{eq:GNNArchi}]. Consider also the same GCNN $\tilde{\bbPhi}(\bbx;\bbS,\ccalH)$ run over RES($\ccalG,p$) subgraph realizations $\{ \bbS_{k\ell}^{fg} \}$. Furthermore, let also the filters be generalized integral Lipschitz with constant $C_L$ [Def. \ref{def:GeneralizedIntegralLIpschitz}] and the nonlinearity $\sigma(\cdot)$ be Lipschitz with constant $C_\sigma$ [Def. \ref{def:LipschitzNonlinearity}]. Then, for any graph signal $\bbx$, the expected difference between the GCNN output run over random graphs $\tilde{\bbPhi}(\bbx; \bbS, \ccalH)$ and that run over the nominal graph $\bbPhi(\bbx; \bbS, \ccalH)$ is upper bounded as
\begin{equation}\label{eq:GNNstability}
\begin{split}
&\mathbb{E} \left[ \| \tilde{\bbPhi}(\bbx;\bbS,\ccalH) - \bbPhi(\bbx;\bbS,\ccalH)\|^2_2 \right]\le C (1-p) \| \bbx \|_2^2 \!+\! \mathcal{O}((1-p)^2)
\end{split}
\end{equation}
where $C= n \alpha C_L^2 L^2 C_\sigma^{2L} F^{2L-2}$ is the stability constant and  scalar $\alpha$ is either the maximal node degree or $2$ depending on the shift operator [cf. Lemma \ref{lemma:traceOperation}].
\end{theorem}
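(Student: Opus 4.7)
The plan is to reduce the GCNN stability bound to the graph filter stability bound via a layer-by-layer induction, carefully tracking how errors propagate through the nonlinearities and the aggregation over features. The starting observation is that the Lipschitz property of $\sigma(\cdot)$ with $\sigma(0)=0$ implies $|\sigma(a)| \le C_\sigma |a|$, and together with $|h(\bblambda)| \le 1$ this gives a uniform pathwise bound on every layer feature of the form $\|\bbx_\ell^f\|_2 \le (C_\sigma F)^{\ell} \|\bbx\|_2$ (up to constants, via Cauchy--Schwarz when collapsing the sum over input features $g$). This feature-norm bound is what will ultimately generate the $F^{2L-2} C_\sigma^{2L}$ factor in the stability constant.

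Next, I would set $\bbe_\ell^f := \tilde{\bbx}_\ell^f - \bbx_\ell^f$ and derive a one-step recursion. Using $\sigma$-Lipschitz, followed by the add--subtract decomposition
\begin{equation*}
\tilde{\bbH}_\ell^{fg}(\bbS)\tilde{\bbx}_{\ell-1}^g - \bbH_\ell^{fg}(\bbS)\bbx_{\ell-1}^g = \big(\tilde{\bbH}_\ell^{fg}(\bbS) - \bbH_\ell^{fg}(\bbS)\big)\bbx_{\ell-1}^g + \tilde{\bbH}_\ell^{fg}(\bbS)\,\bbe_{\ell-1}^g,
\end{equation*}
and the triangle inequality on the sum over $g$, I would obtain
\begin{equation*}
\|\bbe_\ell^f\|_2 \le C_\sigma \sum_{g=1}^F \big\|\big(\tilde{\bbH}_\ell^{fg}(\bbS) - \bbH_\ell^{fg}(\bbS)\big)\bbx_{\ell-1}^g\big\|_2 + C_\sigma \sum_{g=1}^F \big\|\tilde{\bbH}_\ell^{fg}(\bbS)\bbe_{\ell-1}^g\big\|_2.
\end{equation*}
Squaring, applying Cauchy--Schwarz to convert $\left(\sum_g \cdot\right)^2$ into $F \sum_g (\cdot)^2$, and taking expectations prepares each term for one of the two controls available: Theorem \ref{theorem:filterStability} bounds the ``filter perturbation'' term by $C_L^2 n \alpha (1-p)\|\bbx_{\ell-1}^g\|_2^2 + \mathcal{O}((1-p)^2)$, while the operator-norm bound $\|\tilde{\bbH}_\ell^{fg}(\bbS)\| \le 1$ implied by $|h(\bblambda)|\le 1$ reduces the ``propagated error'' term to $\mathbb{E}\|\bbe_{\ell-1}^g\|_2^2$.

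Plugging in the layer-feature bound $\|\bbx_{\ell-1}^g\|_2 \le (C_\sigma F)^{\ell-1}\|\bbx\|_2$ to the first term and then summing over $f$ and $g$ yields a recursion of the schematic form $E_\ell \le a E_{\ell-1} + b_\ell (1-p) + \mathcal{O}((1-p)^2)$ where $a$ gathers the propagation constants $C_\sigma^2 F^2$ and $b_\ell$ grows like $n \alpha C_L^2 C_\sigma^{2\ell} F^{2\ell}$. Unrolling the recursion from $\ell=0$ (where $E_0 = 0$) to $\ell = L$ produces a geometric sum dominated by the $\ell = L$ term, times the number of layers, which accounts for the $L^2$ factor and matches the stated constant $C = n\alpha C_L^2 L^2 C_\sigma^{2L} F^{2L-2}$ after keeping only first-order $(1-p)$ contributions and pushing the cross-terms arising from $(a+b)^2 \le 2a^2 + 2b^2$ into the $\mathcal{O}((1-p)^2)$ remainder.

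The hard part will be the careful bookkeeping at two points: first, verifying that the expectation of the filter-perturbation squared norm applied to the random signal $\bbx_{\ell-1}^g$ is still controlled by Theorem \ref{theorem:filterStability} even though $\bbx_{\ell-1}^g$ is now itself random and in principle correlated with $\{\bbS_{k\ell}^{fg}\}$; this can be handled because the filters at layer $\ell$ are independent of everything in earlier layers, so one can condition on the history and apply Theorem \ref{theorem:filterStability} inside the conditional expectation. Second, ensuring that the many $(1-p)^2$ cross-terms that arise when expanding squared sums do not absorb a power of $F$ or $L$ that would contaminate the leading order; this requires consistently using Young's/Cauchy--Schwarz inequalities with the right split between the linear-in-$(1-p)$ term and the higher-order remainder. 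Once these two accounting issues are handled, the recursion yields the claimed bound.
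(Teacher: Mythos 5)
Your proposal follows essentially the same route as the paper's proof: Lipschitz nonlinearity plus triangle inequality over features, the add--subtract decomposition $\tilde{\bbH}\tilde{\bbx}-\bbH\bbx = (\tilde{\bbH}-\bbH)\bbx + \tilde{\bbH}(\tilde{\bbx}-\bbx)$ with $\|\tilde{\bbH}\|_2\le 1$, unrolling the layer recursion, invoking Theorem \ref{theorem:filterStability} on each filter-perturbation term, and the feature-norm bound $\|\bbx_{\ell}^f\|_2^2 \lesssim C_\sigma^{2\ell}F^{2\ell-2}\|\bbx\|_2^2$ to assemble the constant. The one point where you are more cautious than necessary: in your decomposition the perturbed filter difference acts on the \emph{nominal} feature $\bbx_{\ell-1}^g$, which is deterministic, so Theorem \ref{theorem:filterStability} applies directly without any conditioning argument.
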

\begin{proof}
See \ref{proof:theorem2}.
\end{proof}

Theorem \ref{theorem:GNNstability} shows the GCNN is stable to stochastic graph perturbations, and the result holds uniformly for any graph of $n$ nodes. The expected deviation in the GCNN output over random graphs is upper bounded by a term that depends linearly on the link loss probability $1-p$. When the link sampling probability $p \to 1$, the stochastic perturbations are small and the bound reduces to zero, i.e, the GCNN maintains its performance. While the bound may have a similar form as that for the filter [Thm. \ref{theorem:filterStability}], the stability constant $C$ incorporates the effects of different architectural components. In particular, it is the product of three terms:

\smallskip
\begin{enumerate}[(1)]

\item The first term $n \alpha$ captures the impact of the graph. A larger graph leads to a looser bound, which implies the GCNN may be less stable in the large-scale setting. This is because a larger graph introduces more link losses with the same link sampling probability $p$ leading to an increased effect. Also a larger $\alpha$ leads to a worse bound, whose value is determined by the choice of the graph shift operator.

\item The term $C_L^2$ captures properties of the generalized graph filter frequency response $h(\bblambda)$ and the latter is determined by the filter coefficients. Graph filters with a larger generalized integral Lipschitz constant $C_L$ [cf. Def. \ref{def:GeneralizedIntegralLIpschitz}] allow more variability of $h(\bblambda)$ between nearby multivariate frequencies and, thus, are less stable to frequency deviations induced by stochastic perturbations. Reducing $C_L$ yields graph filters whose generalized frequency response $h(\bblambda)$ changes more slowly, increasing the robustness while reducing the discriminatory power.

\item The last term $L^2 C_\sigma^{2L} F^{2L-2}$ represents the impact of the GCNN architecture. It captures the effects of stochastic perturbations passing through the multiple filter banks ($F$), nonlinearities ($C_\sigma$), and layers ($L$). In particular, $C_\sigma$ is typically one implying the non-expansivity of the nonlinearity. A wider architecture with more features per layer and a deeper architecture with more layers lead to a looser bound. Both can be explained by the fact that more filters with stochastic perturbations are involved in the architecture, ultimately, increasing the randomness of the GCNN output.

\end{enumerate}

Note that Theorems \ref{theorem:filterStability} and \ref{theorem:GNNstability} establish the stability results by analyzing the mean square error, which is the second moment information. The latter can also be used to identify the probabilistic contraction bound for filter/GCNN output deviations induced by stochastic graph perturbations, as shown by the following corollary.

\begin{corollary}\label{Corollary2}
Consider the same settings as Theorem \ref{theorem:GNNstability}. For any graph signal $\bbx$, the deviation of stochastically perturbed GNN output satisfies
 \begin{equation}\label{eq:varianceCharacterizeGCNN}
\text{Pr}\left[ \| \tilde{\bbPhi}(\bbx;\bbS,\ccalH) - \bbPhi(\bbx;\bbS,\ccalH)\|_2^2  \le \epsilon \right] \ge 1- \frac{C(1-p)\|\bbx\|_2^2 }{\epsilon} - \ccalO((1-p)^2)
\end{equation}
where $\text{Pr}[\cdot]$ is the probability and $C$ is the stability constant in Theorem \ref{theorem:GNNstability}.
\end{corollary}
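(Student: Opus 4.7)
The plan is to obtain the probabilistic contraction bound as a direct consequence of Theorem \ref{theorem:GNNstability} via Markov's inequality, since the quantity of interest is a nonnegative random variable whose expectation we have already controlled. No new analytical machinery is needed; the argument is essentially a one-line application of a classical tail inequality to the second-moment bound we proved earlier.

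First, I would define the nonnegative random variable
\begin{equation*}
X = \| \tilde{\bbPhi}(\bbx;\bbS,\ccalH) - \bbPhi(\bbx;\bbS,\ccalH)\|_2^2,
\end{equation*}
whose randomness comes entirely from the RES($\ccalG,p$) subgraph realizations $\{ \bbS_{k\ell}^{fg}\}$ used inside $\tilde{\bbPhi}$. Since $X \ge 0$, Markov's inequality yields, for any $\epsilon > 0$,
\begin{equation*}
\text{Pr}[X > \epsilon] \;\le\; \frac{\mathbb{E}[X]}{\epsilon}.
\end{equation*}
Second, I would substitute the upper bound on $\mathbb{E}[X]$ provided by Theorem \ref{theorem:GNNstability}, namely $\mathbb{E}[X] \le C(1-p)\|\bbx\|_2^2 + \mathcal{O}((1-p)^2)$ with $C = n\alpha C_L^2 L^2 C_\sigma^{2L} F^{2L-2}$, to obtain
\begin{equation*}
\text{Pr}[X > \epsilon] \;\le\; \frac{C(1-p)\|\bbx\|_2^2}{\epsilon} + \frac{\mathcal{O}((1-p)^2)}{\epsilon}.
\end{equation*}
Taking complements gives the stated lower bound on $\text{Pr}[X \le \epsilon]$, where the $1/\epsilon$ factor can be absorbed into the $\mathcal{O}((1-p)^2)$ term since $\epsilon$ is fixed.

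There is no real obstacle here; the only minor subtlety is being careful about the role of $\epsilon$ inside the big-$\mathcal{O}$ notation (treating $\epsilon$ as a fixed constant so that the $(1-p)^2$ remainder from Theorem \ref{theorem:GNNstability} carries through cleanly after dividing by $\epsilon$). If one wanted a cleaner tail bound without the asymptotic remainder, one could alternatively invoke Chebyshev-type reasoning on the exact second moment, but since the corollary is stated with the same $\mathcal{O}((1-p)^2)$ slack as Theorem \ref{theorem:GNNstability}, the direct Markov step is the most efficient route.
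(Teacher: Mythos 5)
Your proposal is correct and matches the paper's argument: the paper's proof is exactly Markov's inequality applied to the nonnegative random variable $\|\tilde{\bbPhi}(\bbx;\bbS,\ccalH) - \bbPhi(\bbx;\bbS,\ccalH)\|_2^2$, only derived from scratch via conditioning on the event $\{\Delta\bbPhi > \epsilon\}$ rather than cited by name. Your remark about absorbing the $1/\epsilon$ factor into the $\ccalO((1-p)^2)$ remainder is, if anything, slightly more careful than the paper, which silently drops that division.
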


\begin{proof}
See \ref{proof:corollary2}.
\end{proof}

\noindent Corollary \ref{Corollary2} quantifies how much the stochastically perturbed GCNN output deviates from the unperturbed GCNN output\footnote{The similar result can be obtained for the graph filter from Theorem \ref{theorem:filterStability}.}. Specifically, it establishes that the probability that the stochastically perturbed output deviates from the unperturbed output by more than $\epsilon$ is at most a fraction of $C(1-p)\|\bbx\|_2^2 /\epsilon$. This result provides an explicit guarantee for the deviation distribution of the GCNN output induced by stochastic perturbations.

In general, Theorems \ref{theorem:filterStability} and \ref{theorem:GNNstability} not only characterize the stability of graph filters and GCNNs respectively, but also indicate the impact of the graph stochasticity, the filter property, the nonlinearity, and the architecture width and depth. These theoretical results provide insights on which factors to consider when designing a GCNN architecture with improved robustness under stochastic perturbations. Specifically, the nonlinearity $\sigma(\cdot)$, the architecture width $F$, and the architecture depth $L$ are design choices. A wider GCNN with more features or a deeper GCNN with more layers results in a worse stability but improves the representational power. One could decide how wide or deep the GCNN should be to achieve a trade-off between these two factors. Moreover, we see that the number of layers $L$ is the most critical factor that affects the bound. If a robust GCNN is needed, an architecture with less layers and more features may be preferred compared with an architecture with more layers and less features.

Lastly, we remark that the stability properties are built upon the generalized integral Lipschitz property [cf. \eqref{eq:GeneralizedIntegralLipschitzFilter}]. The latter is important because it restricts the change of the generalized frequency response $h(\bblambda)$ induced by the change of multivariate frequency $\bblambda$, especially for large-value frequencies. With the following example we aim to illustrate the latter in more detail.

\textbf{Example 1.} Consider a sequence of shift operators $\bbS_1,\ldots,\bbS_K$ characterized by the RES($\ccalG, p$) model with expected value
\begin{equation}\label{eq:IntegralDiscuss1}
\mathbb{E}[\bbS_k] = p \bbS~\forall~k=1,\ldots,K.
\end{equation}
The expected spectrum of $\mathbb{E}[\bbS_k]$ depends on not only the link sampling probability $p$ but also the underlying shift operator $\bbS$. I.e., the eigenvalues of $\mathbb{E}[\bbS_k]$ are of the form $p \lambda_i$ for a given eigenvalue $\lambda_i$ of $\bbS$. The generalized frequency response instantiated on the sequence of expected graphs is $h(p \bblambda)$ instead of $h(\bblambda)$. The key point to establish Theorems \ref{theorem:filterStability}-\ref{theorem:GNNstability} is to characterize the difference $|h(\bblambda)-h(p\bblambda)|$. If $h(\bblambda)$ is only generalized Lipschitz --i.e., $\| \nabla_L h(\bblambda_1,\bblambda_2) \|_2 \le C_L$, which is easier to satisfy-- but not generalized integral Lipschitz [Def. \ref{def:GeneralizedIntegralLIpschitz}], this difference is upper bounded by
\begin{align}\label{eq:IntegralDiscuss2}
|h(\bblambda) - h(p\bblambda)| &\leq \|\nabla_L h(\bblambda, p \bblambda)\|_2 \|(1-p)\bblambda\|_2\\
&\leq (1-p) C_L \|\bblambda\|_2\nonumber
\end{align}
which is small when the link sampling probability is high (i.e., $p \approx 1$) and the multivariate frequency is small (i.e., $\| \bblambda \|_2$ is small). However, for large-value multivariate frequency $\bblambda$, the filter becomes instable even if there are a few stochastic perturbations. The generalized integral Lipschitz property solves this issue by tapering off as $\bblambda$ grows (i.e., as $\|\bblambda\|_2$ increases). More specifically, we have 
\begin{align}\label{eq:IntegralDiscuss3}
&|h(\bblambda) - h(p\bblambda)| = |(1-p)\bblambda^\top \nabla_L h(\bblambda, p\bblambda)| \\
& \le \| (1-p)\bblambda \odot \nabla_L h(\bblambda, p\bblambda)\|_2 \le (1-p)C_L \nonumber.
\end{align}
This recovers stability for large-value multivariate frequencies.

Thus, the generalized integral Lipschitz property plays an important role in the stability analysis to stochastic perturbations as its counterpart for the deterministic perturbation \cite{gama2020stability}.

\section{Numerical Simulations}\label{sec:numericalExperiments}

We corroborate the theoretical results on distributed source localization (Section \ref{exp:source}) and robot swarm control (Section \ref{exp:flocking}). In both cases, we evaluate the stability of the GCNN to stochastic graph perturbations characterized by the RES($\ccalG, p$) model.

\subsection{Source Localization} \label{exp:source}

The goal of this experiment is to determine the source of diffused graph signals distributively. We consider the signal diffusion process over a stochastic block model (SBM) graph, which contains $n=100$ nodes uniformly divided into $c=5$ communities. The intra-community link probability is $0.8$ and the inter-community link probability is $0.2$. The source signal is a Kronecker delta $\bbdelta_s = [\delta_1, \ldots, \delta_n]^\top \in \{ 0,1 \}^n$ where $\delta_s \neq 0$ at the source node $s \in \{ s_1, \ldots, s_5 \}$. The diffused signal at time $t$ is $\bbx_{st} = \bbS^t \bbdelta_s + \bbn$ with $\bbS = \bbA / \lambda_{max}(\bbA)$ the normalized adjacency matrix and $\bbn$ the normal noise. The dataset consists of $15,000$ samples generated randomly with a source node and a diffused time $t \in [0, 50]$, and is split into $10,000$ for training, $2,500$ for validation, and $2,500$ for testing. We considered two baseline architectures: a linear graph filter bank and a two-layer GCNN with ReLU nonlinearity. Both comprise $F=32$ parallel filters of order $K=5$. We used the ADAM optimizer for training with decaying factors $\beta_1 = 0.9, \beta_2 = 0.999$ and the learning rate $\mu = 10^{-3}$ \cite{Ba2010}. We measured the performance as the classification accuracy averaged over ten graph realizations and ten data splits.

\begin{figure}[t]
\centering
\includegraphics[width=0.6\linewidth , height=0.4\linewidth, trim=10 10 10 10]{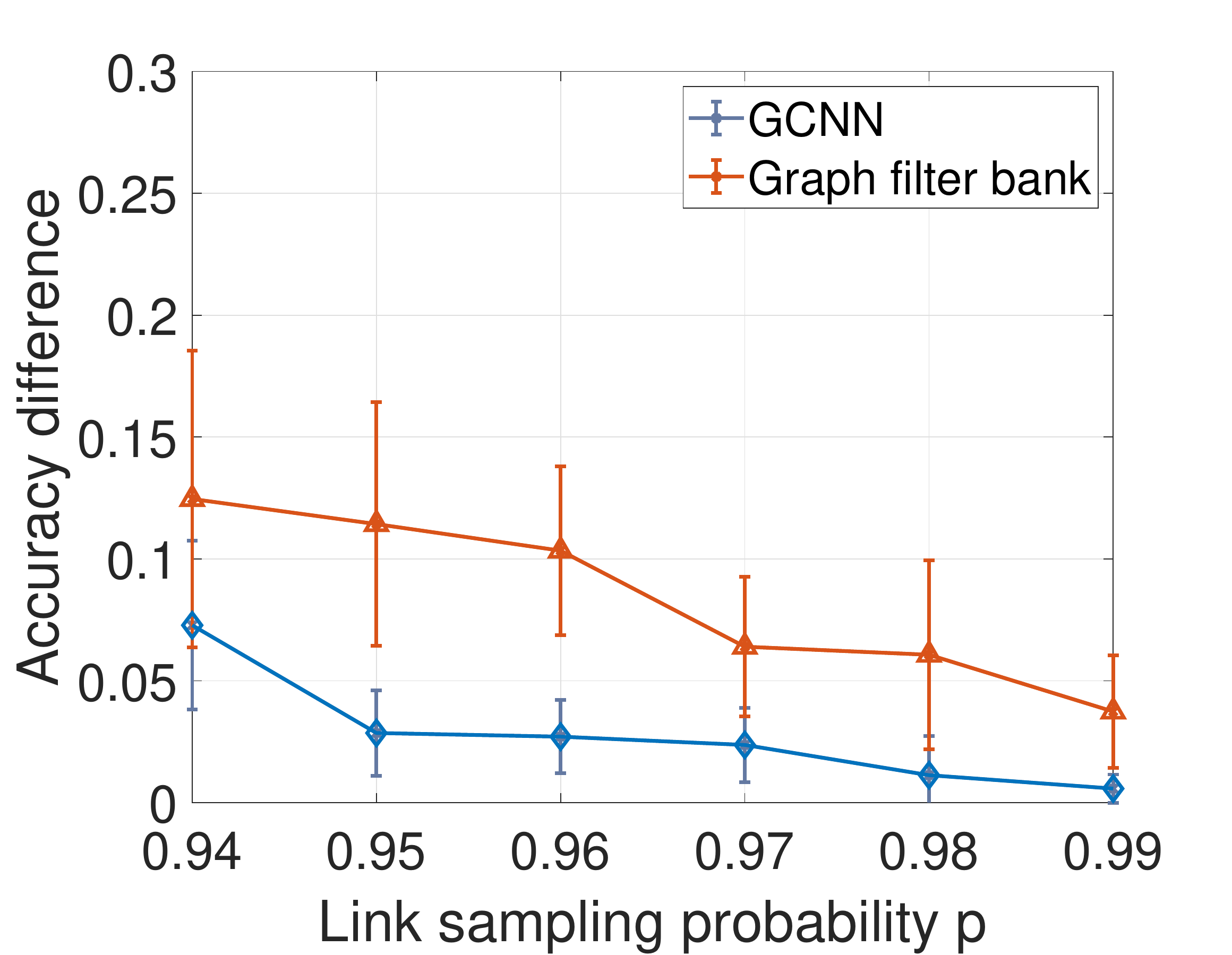}
\caption{Performance difference of the graph filter bank and the GCNN induced by stochastic perturbations with different link sampling probabilities $p$ in the source localization.}
\label{fig:source_different_p}
\end{figure}

\begin{figure*}%
\centering
\begin{subfigure}{0.48\columnwidth}
\includegraphics[width=1.1\linewidth,height = 0.85\linewidth]{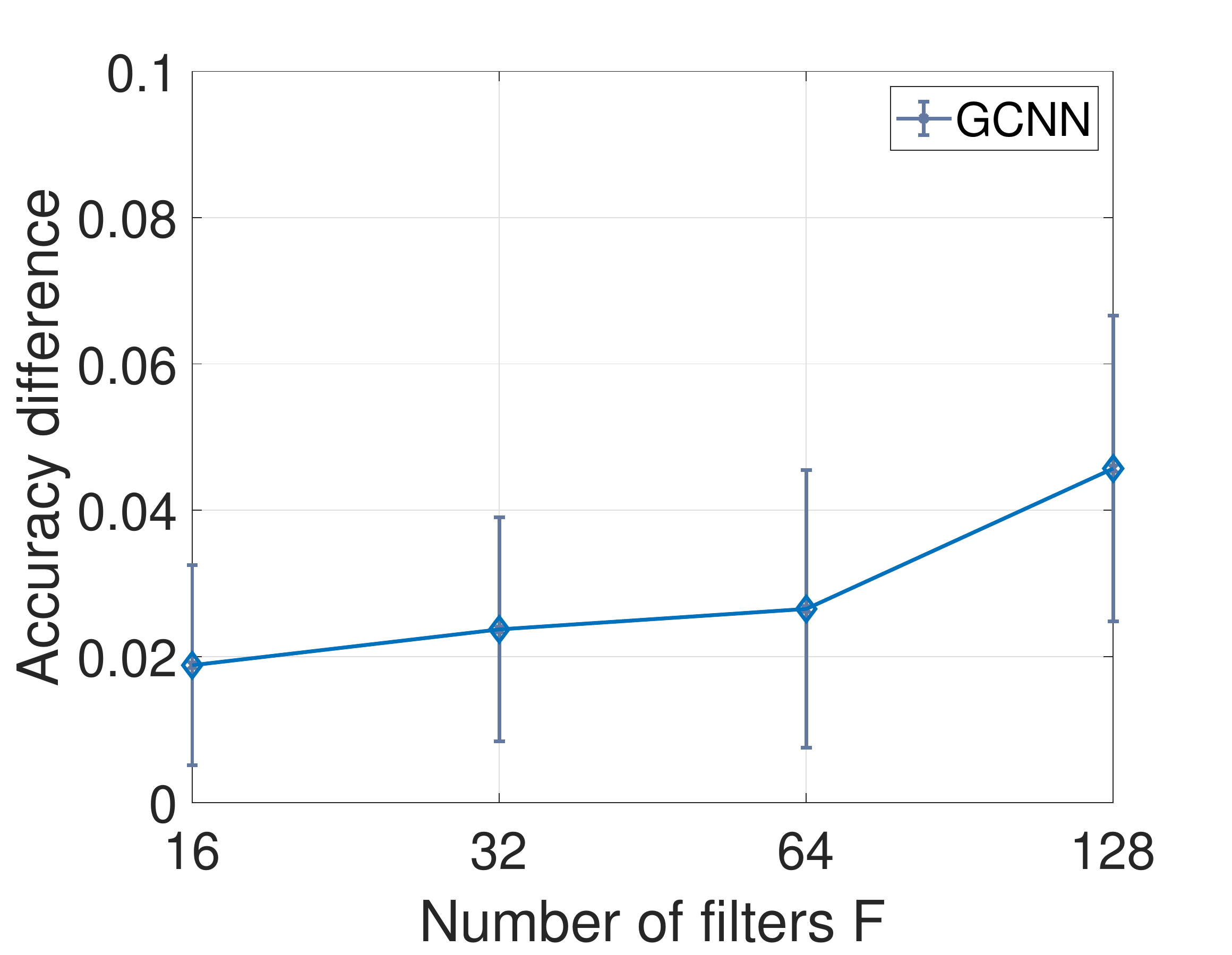}%
\caption{}%
\label{subfigb_vary_r}%
\end{subfigure}\hfill\hfill%
\begin{subfigure}{0.48\columnwidth}
\includegraphics[width=1.1\linewidth, height = 0.85\linewidth]{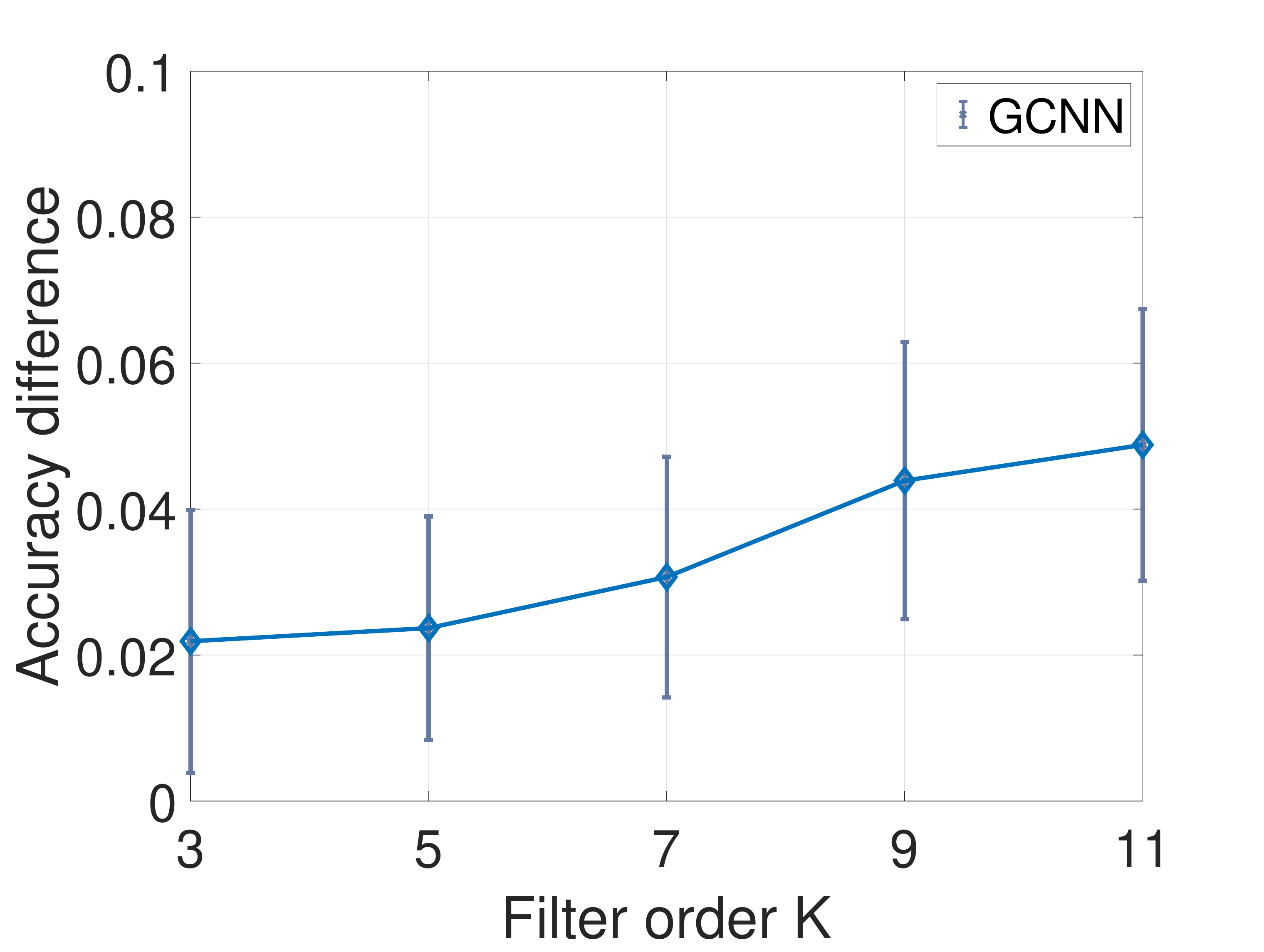}%
\caption{}%
\label{subfiga_vary_K}%
\end{subfigure}\hfill\hfill%
\begin{subfigure}{0.48\columnwidth}
\includegraphics[width=1.075\linewidth,height = 0.85\linewidth]{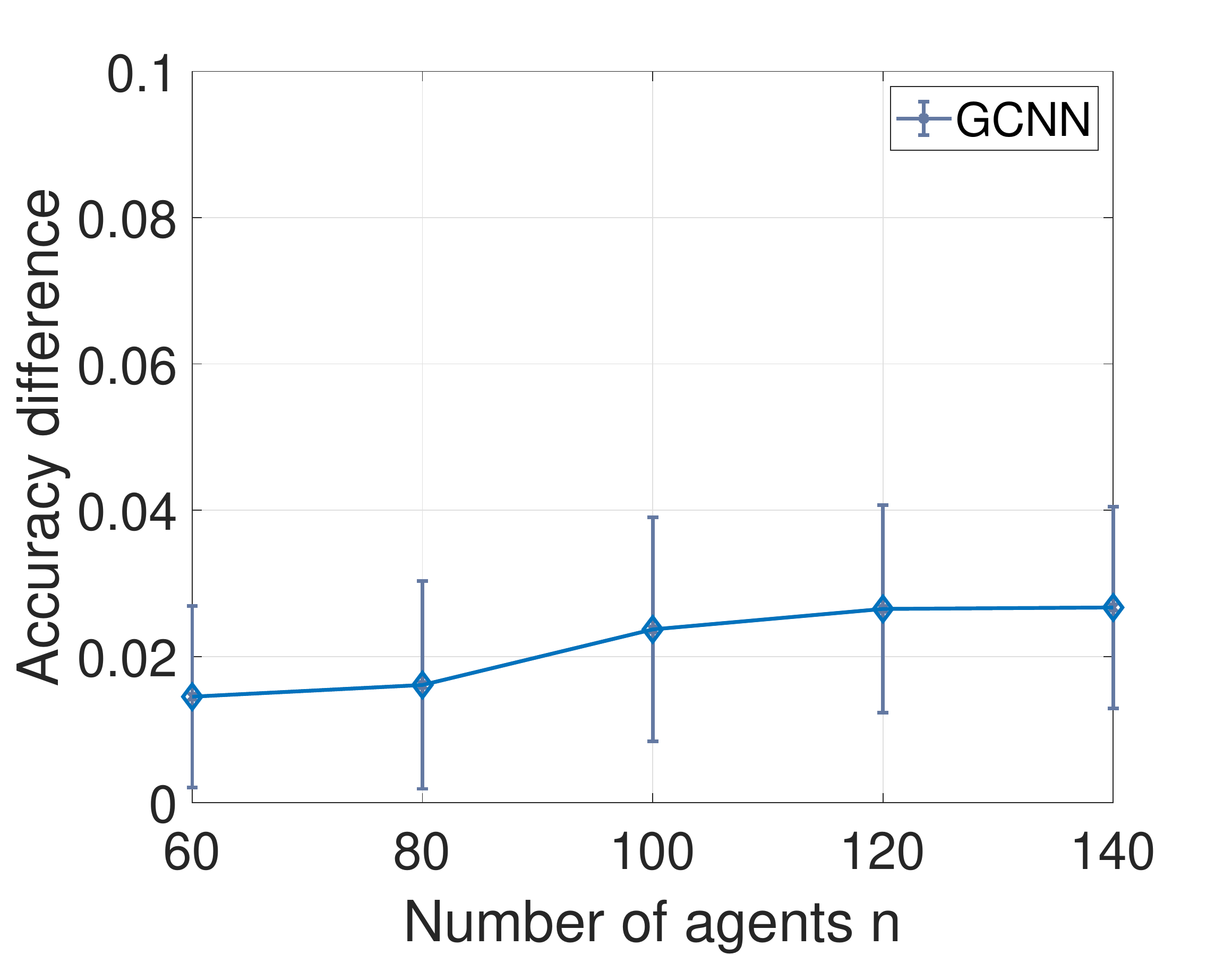}%
\caption{}%
\label{subfigc_vary_n}%
\end{subfigure}\hfill\hfill%
\begin{subfigure}{0.48\columnwidth}
\includegraphics[width=1.1\linewidth,height = 0.85\linewidth]{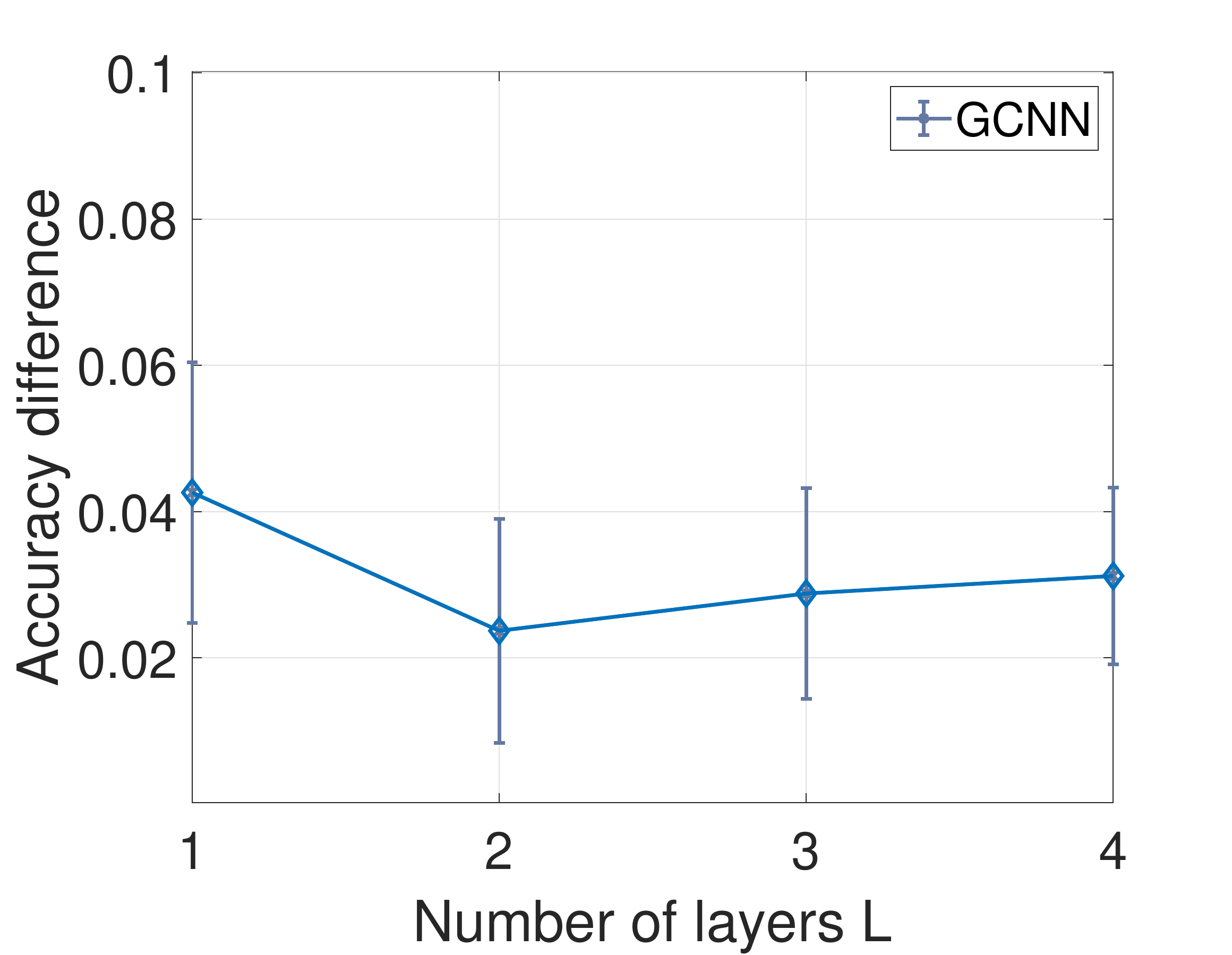}%
\caption{}%
\label{subfigb_vary_L}%
\end{subfigure}%
\caption{Performance difference induced by stochastic perturbations in the source localization. (a) Comparison under different number of filters $F$. (b) Different filter order $K$. (c) Different number of agents $n$. (d) Different number of layers $L$. For reference, the expected performance of the filter bank and the GCNN on the nominal graph are $0.86$ and $0.94$, respectively.}\label{fig:vary_n1}
\end{figure*}

First, we analyzed the effects of the link sampling probability $p \in [0.94, 0.99]$, which represents the severity of stochastic perturbations. Fig. \ref{fig:source_different_p} shows the expected performance difference of the filter bank and the GCNN. We see that when the link sampling probability $p$ approaches one, there is little performance degradation. Instead when $p$ becomes smaller and more links get lost, the performance of both architectures degrades. Furthermore, the GCNN exhibits a stronger stability with a lower expected difference and standard deviation compared with the filter bank. We attribute this behavior to two aspects: (i) the Lipschitz nonlinearity is nonexpansive when propagating the stability result [cf. \eqref{eq:LipschitzNonlinear}]; (ii) the filter bank performs worse with lower classification accuracy such that its performance may be more susceptible to the architecture output difference induced by stochastic perturbations.

Secondly, we analyzed the stability of the GCNN for different problem settings\footnote{The graph filter bank exhibits similar behaviors as the GCNN in these settings and we do not show them for repetition avoidance.}, i.e., different number of filters $F$ in Fig. \ref{subfigb_vary_r}, different filter order $K$ in Fig. \ref{subfiga_vary_K}, and different number of nodes $n$ in Fig. \ref{subfigc_vary_n}. The link sampling probability is set to $p = 0.97$. Fig. \ref{subfigb_vary_r} and Fig. \ref{subfiga_vary_K} show the expected performance difference increases with the number of filters $F$ and the filter order $K$. This corroborates the hypothesis that more filters with higher filter orders contain more graph shifts, which amplifies the impact of stochastic perturbations on the performance of the GCNN. In Fig. \ref{subfigc_vary_n}, we see the expected performance difference increases with the number of nodes $n$ and the increasing rate decreases for larger $n$. This is because stochastic perturbations tie to the underlying graph [cf. \eqref{eq:IntegralDiscuss1}] such that a larger graph leads to more link losses and larger performance difference. When the network size increases further, the graph becomes more connected and more robust to information exchanges between nodes. The GCNN learns stronger representations whose performance is less susceptible to the output difference and the latter slows down the increasing rate.

Lastly, we compared the stability of GCNNs with different numbers of layers $L$ in Fig. \ref{subfigb_vary_L}. The number of filters per layer is $F=32$ and the link sampling probability is $p=0.97$. We see the expected performance difference first decreases for $L$ from $1$ to $2$ and then increases for $L\ge 2$. The former is because the one-layer GCNN has a worse performance that may be more susceptible to the output difference caused by stochastic perturbations. The results for $L \ge 2$ correspond to the findings in Thm. \ref{theorem:GNNstability} that a deeper GCNN may lead to a less stable architecture.

\subsection{Robot Swarm Control}\label{exp:flocking}

\begin{figure*}%
\centering
\begin{subfigure}{0.48\columnwidth}
\includegraphics[width=1.1\linewidth, height = 0.85\linewidth]{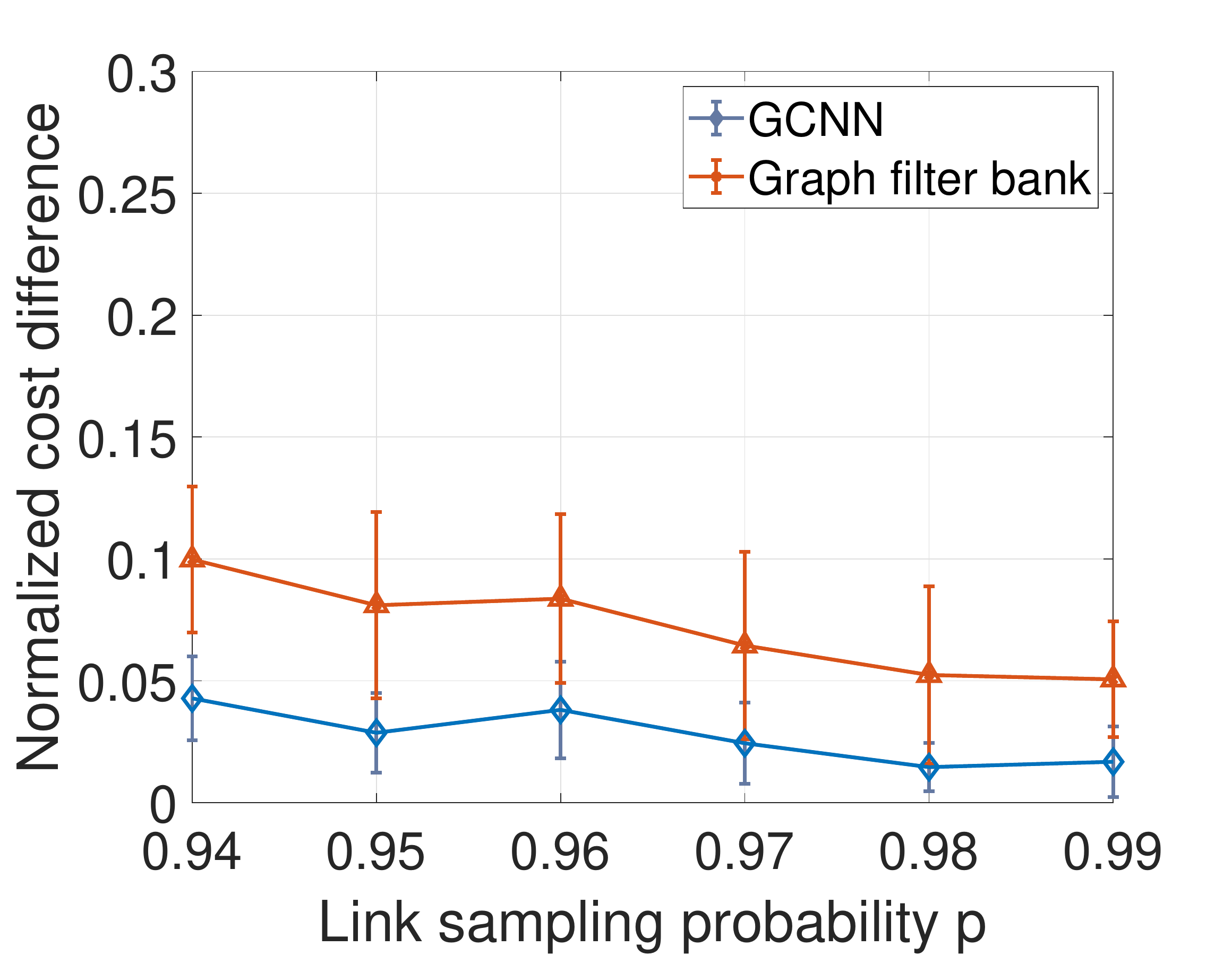}%
\caption{}%
\label{subfig6_vary_p}%
\end{subfigure}\hfill\hfill%
\begin{subfigure}{0.48\columnwidth}
\includegraphics[width=1.1\linewidth,height = 0.85\linewidth]{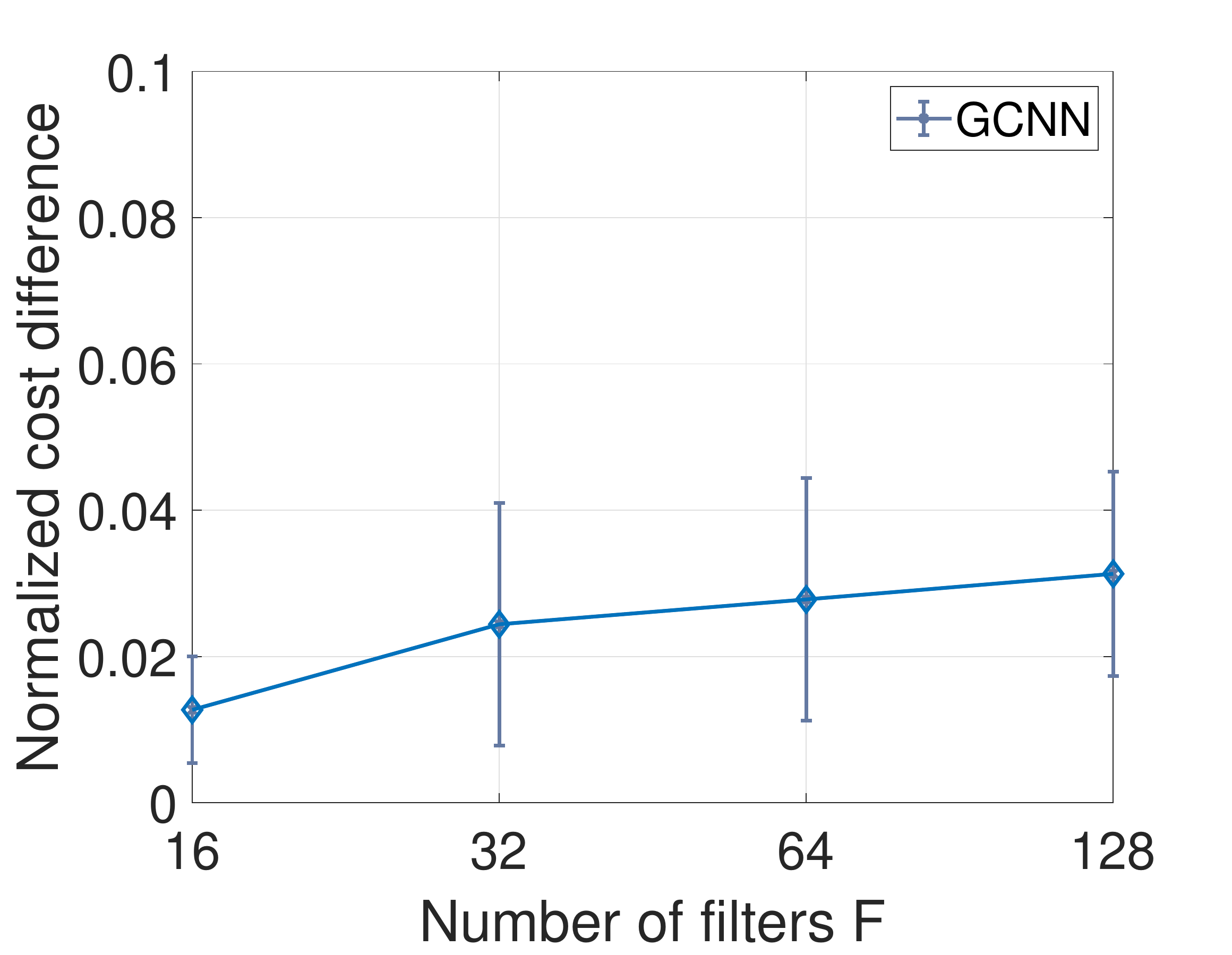}%
\caption{}%
\label{subfig6_vary_r}%
\end{subfigure}\hfill\hfill%
\begin{subfigure}{0.48\columnwidth}
\includegraphics[width=1.1\linewidth,height = 0.85\linewidth]{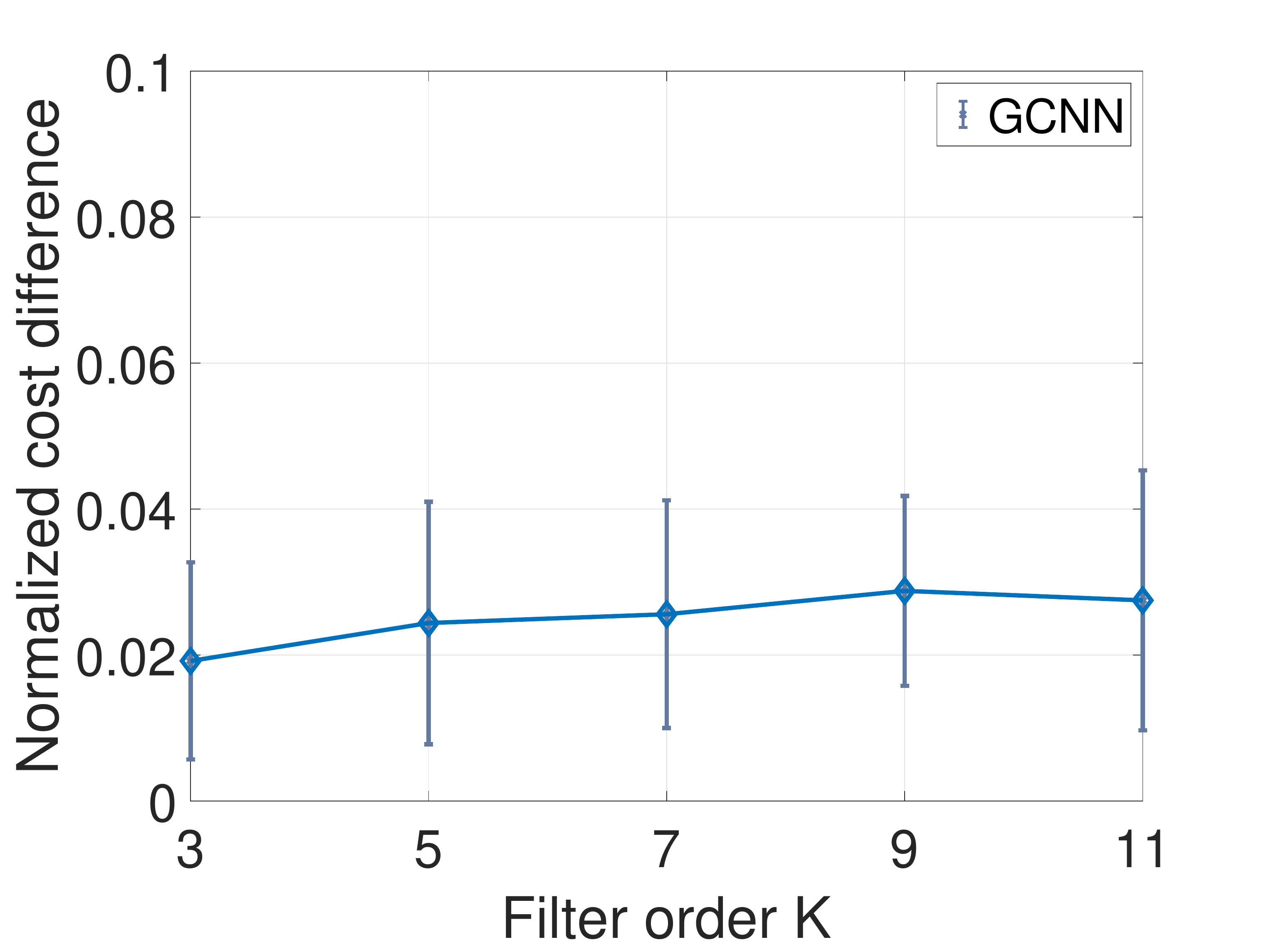}%
\caption{}%
\label{subfig6_vary_K}%
\end{subfigure}\hfill\hfill%
\begin{subfigure}{0.48\columnwidth}
\includegraphics[width=1.075\linewidth,height = 0.85\linewidth]{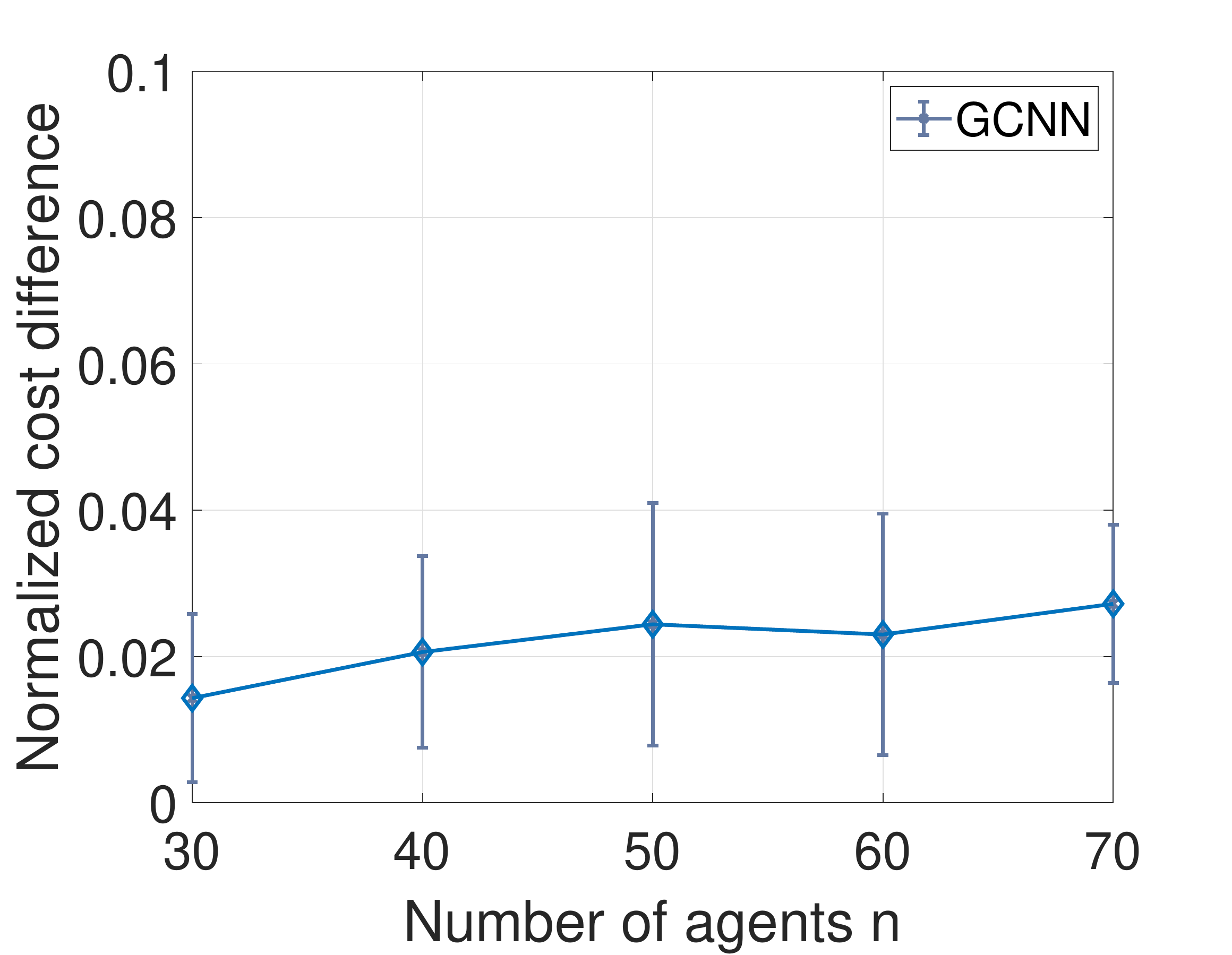}%
\caption{}%
\label{subfig6_vary_n}%
\end{subfigure}%
\caption{Normalized cost difference induced by stochastic perturbations in the robot swarm control. (a) Comparison between the GCNN and the graph filter bank under different link sampling probabilities $p \in [0.94,0.99]$. (b) Different number of filters $F$. (c) Different filter order $K$. (d) Different number of agents $n$. For reference, the expected cost of the filter bank and the GCNN without stochastic perturbations are $397.87$ and $88.74$, respectively.}\label{fig:vary_n1}
\end{figure*}

In this section, we considered the robot swarm control from \cite{gama2020graphs}. The goal is to learn a distributed controller that coordinates agents to move at the same velocity while avoiding collision. The network consists of $n=50$ agents with initial velocities sampled randomly in $[-3m/s, 3m/s]$. This task has a closed-form optimal solution that is centralized requiring the information of all agents. We aim to learn a distributed controller with the GCNN, which coordinates agents with only local neighborhood information. 

We consider agent $i$ can communicate with agent $j$ if they are within the communication radius of $r=2$m. The communication graph is $\ccalG(t) = \{ \ccalV, \ccalE(t) \}$ with agents $\ccalV = \{ 1, \ldots, n \}$ and communication links $\ccalE(t)$. The graph signal $\bbX(t)$ is the relevant feature of agent positions and velocities at time $t$ \cite{tolstaya2020learning}. The dataset contains $450$ trajectories of $100$ time steps, each of which is generated by initially positioning agents randomly in a circle with a minimum separation of $0.1$m, and is split into $400$, $25$ and $25$ samples for training, validation and test. We considered the linear graph filter bank and the one-layer GCNN with hyperbolic tangent nonlinearity\footnote{We only consider the one-layer GCNN in this experiment because multiple-layer GCNNs need to repeat agent communications in each layer [cf. \eqref{eq:GNNArchi}], which is not practical due to agent movements and graph changes \cite{tolstaya2020learning}.}. In both cases, we consider $F=32$ parallel filters of order $K=5$. We used the imitation learning to train the architectures by mimicing the centralized controller. The ADAM optimizer is used with decaying factors $\beta_1 = 0.9, \beta_2 = 0.999$ and the learning rate $\mu = 5 \cdot 10^{-4}$. We measure the controller performance as the velocity variance of agents throughout the trajectory \cite{Xiao2007}, and define the normalized performance difference $\gamma$ as the ratio of the cost difference $\Delta \ccalC$ induced by stochastic perturbations and the unperturbed cost $\ccalC$
\begin{equation}\label{eq:optimalController}
\begin{split}
\gamma = \frac{\Delta \ccalC}{\ccalC}.
\end{split}
\end{equation}
Our results are averaged over ten simulations.

In Fig. \ref{subfig6_vary_p}, we depict the expected normalized performance difference and the standard deviation for different link sampling probabilities $p \in [0.94, 0.99]$. 
We observe that when $p$ gets close to one, both the GCNN and the filter bank maintain the performance with little degradation. The degradation becomes more visible for lower link sampling probabilities $p$, while the GCNN achieves a stronger stability with a lower expected difference and standard deviation than the filter bank. We again attribute this behavior to the nonexpansivity of the nonlinearity and the improved performance of the GCNN.

We further compare the stability of the GCNN in different scenarios with a link sampling probability $p=0.97$. Fig. \ref{subfig6_vary_r}, Fig. \ref{subfig6_vary_K}, and Fig. \ref{subfig6_vary_n} analyze the impact of the number of filters $F$, the filter order $K$, and the number of agents $n$. When the number of filters $F$ and the filter order $K$ increase, the effects of link losses are more enhanced since more filters with higher filter orders add up the stochastic perturbations. When the number of agents $n$ increases, the performance difference increases with a decaying rate. This behavior is similar as in the previous experiment, i.e., a larger graph implies more link losses leading to a worse degradation, while it facilitates the information exchange between agents; hence, the learned architecture achieves better performance that may be less sensitive to the output difference. 

Overall, we remark the performance difference keeps relatively small values even for large $F$ and $n$, which indicates the GCNN maintains a strong stability.

\section{Conclusions}\label{sec:conclusions}

We studied the impact of stochastic perturbations on graph convolutional neural networks. We discussed first the stability of the graph filter by generalizing the graph spectral analysis over a succession of random graphs, and proved the expected output differs from the deterministic output by a factor that is upper bounded linearly by the link loss probability. We then showed the GCNN is stable to stochastic perturbations. The result indicates the explicit role played by the graph stochasticity, filter property, nonlinearity, and architecture width and depth on the stability of the GCNN. We found out higher Lipschitz constants of the filter and nonlinearity result in a worse stability. The same holds also for GCNNs containing more filters and layers. These theoretical insights help to establish the limitations of current solutions and identifying handle to improve the robustness. Numerical simulations corroborated our theoretical findings and showed that the GCNN maintains its performance under mild stochastic perturbations. This paper focused on the RES($\ccalG, p$) model motivated by practical distributed applications over physical networks. Interesting topics for future works include stability results for models involving edge additions or edge correlations.

\appendix 





\section{Proof of Lemma 1} \label{proof:lemmaLischitz}

\begin{proof} 
The generalized frequency response $h(\bblambda)$ is defined as
\begin{align}\label{proof:lemmaLischitz1}
h(\bblambda) = \sum_{k=0}^K h_k \prod_{\kappa=0}^k \lambda_k
\end{align}
with $\lambda_0 = 1$ by default [Def. \ref{def:IntegralLIpschitz}]. The partial derivatives $\big\{\frac{\partial h(\bblambda^{(k)})}{\partial \lambda_k}\big\}_{k=1}^K$ that define the Lipschitz gradient $\nabla_L h(\bblambda_1, \bblambda_2)$ are
\begin{align}
\frac{\partial h(\bblambda^{(1)})}{\partial \lambda_1} \!=\!\! \sum_{k=1}^K h_k\!\! \prod_{\kappa=2}^k\!\! \lambda_{2\kappa},~ \frac{\partial h(\bblambda^{(2)})}{\partial \lambda_2} \!=\! \sum_{k=2}^K \!h_k \lambda_{11}\!\! \prod_{\kappa=3}^k\!\! \lambda_{2\kappa}, \cdots\!, \frac{\partial h(\bblambda^{(K)})}{\partial \lambda_K} \!=\! h_K\!\! \prod_{\kappa = 1}^{K-1}\!\! \lambda_{1\kappa}\nonumber
\end{align}
where $\prod_{\kappa=k_1}^{k_2} \lambda_{i \kappa} = 1$ by default if $k_1 > k_2$ for $i=1, 2$. We expand the inner product $\nabla_L^\top h(\bblambda_1, \bblambda_2) (\bblambda_1 - \bblambda_2)$ as
\begin{align}\label{proof:lemmaLischitz3}
&\nabla_L^\top h(\bblambda_1,\! \bblambda_2) (\bblambda_1 \!-\! \bblambda_2) \!=\! \frac{\partial h(\bblambda^{(1)})}{\partial \lambda_1} (\lambda_{11}\!-\!\!\lambda_{21}) \!+\! \cdots \!+\! \frac{\partial h(\bblambda^{(K)})}{\partial \lambda_K}(\lambda_{1K}\!-\!\!\lambda_{2K}).
\end{align}
To simplify \eqref{proof:lemmaLischitz3}, we first consider terms that involve the filter coefficient $h_1$ since there is no term involving the filter coefficient $h_0$. There is only one term in $\big(\partial h(\bblambda^{(1)})/\partial \lambda_1\big)(\lambda_{11}-\lambda_{21})$, and we have
\begin{align}\label{proof:lemmaLischitz4}
h_1 (\lambda_{11}-\lambda_{21}) = h_1 \lambda_{11} - h_1 \lambda_{21}.
\end{align}
We then consider terms that involve the filter coefficient $h_2$. There are two terms in $\big(\partial h(\bblambda^{(1)})/\partial \lambda_1\big)(\lambda_{11}-\lambda_{21})$ and $\big(\partial h(\bblambda^{(2)})/\partial \lambda_2\big)(\lambda_{12}-\lambda_{22}) $ and we have
\begin{align}\label{proof:lemmaLischitz5}
h_2 \lambda_{22}(\lambda_{11}-\lambda_{21}) + h_2 \lambda_{11}(\lambda_{12}-\lambda_{22}) = h_2 \lambda_{11} \lambda_{12} - h_2 \lambda_{21} \lambda_{22}.
\end{align}
Following the same procedure with respect to the filter coefficients $h_3, \ldots, h_K$, we obtain
\begin{align}\label{proof:lemmaLischitz6}
\nabla_L^\top h(\bblambda_1, \bblambda_2) (\bblambda_1 \!-\! \bblambda_2) \!=\! \sum_{k=1}^K h_k \prod_{\kappa=1}^k \lambda_{1\kappa} \!-\! \sum_{k=1}^K h_k \prod_{\kappa=1}^k \lambda_{2\kappa} = h(\bblambda_1) - h(\bblambda_2)
\end{align}
completing the proof.
\end{proof}


\section{Proof of Theorem 1} \label{proof:theorem1}

\begin{proof}

Let $\tilde{\bbu} = \tilde{\bbH}(\bbS)\bbx$ be the filter output over RES($\ccalG,p$) subgraph realizations $\{\bbS_k\}_{k=1}^K$ [cf. \eqref{eq:randomGraphFilter}] and $\bbu = \bbH(\bbS)\bbx$ be that over the underlying graph $\bbS$ [cf. \eqref{eq:graphFilter}]. The expected output difference between $\tilde{\bbu}$ and $\bbu$ is
\begin{align} \label{proof:thm11}
&\mathbb{E}\left[\| \tilde{\bbu} - \bbu \|^2_2\right]\!=\! \mathbb{E}\!\left[\tr \big(\tilde{\bbu}^\top\! \tilde{\bbu} \!+\! \bbu^\top\! \bbu \!-\! 2\tilde{\bbu}^\top \bbu \big)\right]
\end{align}
where $\tr(\cdot)$ is the trace operator. By adding and subtracting $\bbu^\top \bbu$ in the trace, we can rewrite \eqref{proof:thm11} as
\begin{align} \label{proof:thm12}
\mathbb{E}\!\left[\tr \big(\tilde{\bbu}^\top\! \tilde{\bbu} \!-\! \bbu^\top\! \bbu \big)\right]+ 2 \mathbb{E}\!\left[\tr\big( \bbu^\top\! \bbu \!-\! \tilde{\bbu}^\top \bbu \big)\right]
\end{align}
where the linearity of the expectation and the trace is used. We consider two terms in \eqref{proof:thm12} separately.

For the first term $\mathbb{E}\!\left[\tr \Big(\tilde{\bbu}^\top\! \tilde{\bbu} \!-\! \bbu^\top\! \bbu \Big)\!\right]$, we substitute the filter expression for $\tilde{\bbu}$ and $\bbu$ and use the symmetry of the shift operator $\bbS_k$ and $\bbS$ to write\footnote{Throughout this proof, we will use the shorthand notation $\sum_{a,b,c = \alpha, \beta, \gamma}^{A, B, C} (\cdot)$ to denote $\sum_{a = \alpha}^A\sum_{b = \beta}^B\sum_{c = \gamma}^C (\cdot)$ to avoid overcrowded expressions. When the extremes of the sum ($\alpha, \beta, \gamma$ or $A, B, C$) are the same, we will write directly the respective value.}
\begin{gather} \label{proof:thm13}
\begin{split}
&\mathbb{E}\big[\tr \big(\tilde{\bbu}^\top\! \tilde{\bbu} \!-\! \bbu^\top\! \bbu \big)\big] = \sum_{k, \ell=0}^K h_k h_\ell \big( \mathbb{E} \big[ \tr \big( \tilde{T}(k,\ell) \big) \big] -  \tr \big( T(k,\ell) \big) \big)
\end{split}
\end{gather}
with $\tilde{T}(k,\ell)= \bbS_{k:0} \bbx \bbx^{\top} \bbS_{0:\ell}$ and $T(k, \ell)= \bbS^k  \bbx \bbx^{\top} \bbS^\ell$, where $\bbS_{k:0} = \bbS_k \cdots \bbS_0$, $\bbS_{0:\ell} = \bbS_0\cdots\bbS_\ell$ are concise notations and $\bbS_0 = \bbI$. We denote by $\lceil k\ell \rceil = \max(k,\ell)$ and $\lfloor k\ell \rfloor = \min(k,\ell)$ to further simplify notation, and represent the random shift operator $\bbS_k$ as $\bbS_k = \bbS + \bbE_k$ with $\bbE_k$ the deviation of $\bbS_k$ from $\bbS$. Substituting these expressions into $\tilde{T}(k,\ell)$ and expanding the terms yields
\begin{align}
\label{proof:thm135} &\mathbb{E}\left[ \tilde{T}(k,\ell)\right] = \mathbb{E}\left[ (\bbS + \bbE_k)\cdots \bbx \bbx^\top \cdots (\bbS + \bbE_\ell)\right]\\
&\!=\! {\bbS}^{k}\bbx \bbx^\top\! {\bbS}^{\ell}  \!\!+\! \mathbb{E}\big[ (\bbS \!+\! \bbE_k)\!\cdots\! \bbx \bbx^\top \bbS^\ell \!\!-\! {\bbS}^{k}\bbx \bbx^\top\! {\bbS}^{\ell} \big]\!+\! \mathbb{E}\big[ \bbS^{k}\bbx \bbx^\top \cdots (\bbS \!+\! \bbE_\ell) \!-\! {\bbS}^{k}\bbx \bbx^\top\! {\bbS}^{\ell}\big] \nonumber\\
& + \mathbb{E}\Big[ \sum_{r=1}^{\lfloor k\ell \rfloor} \bbS^{k-r}\bbE_r \bbS^{r-1}\bbx \bbx^\top \bbS^{r-1}\bbE_r \bbS^{\ell-r}\Big] + \mathbb{E}\left[ \bbC_{k\ell}\right] \nonumber 
\end{align}
for $k,\ell \ge 1$. The first term in \eqref{proof:thm135} contains the maximal power of $\bbS$; the second term includes error matrices only expanded from left-side $\bbS_{k:0}=(\bbS \!+\! \bbE_k)\cdots (\bbS+\bbE_1)$; the third term includes error matrices only expanded from right-side $\bbS_{0:\ell}=(\bbS \!+\! \bbE_1)\cdots (\bbS+\bbE_\ell)$; the fourth term collects the cross-products that include two error matrices $\bbE_r$ with the same index, and the last term $\bbC_{k\ell}$ aggregates the sum of the remaining terms. We then rewrite \eqref{proof:thm135} as
\begin{align}
\label{proof:thm14} 
 \mathbb{E}\left[ \tilde{T}(k,\ell)\right]&= - {\bbS}^{k}\bbx \bbx^\top {\bbS}^{\ell}  \!+\! \mathbb{E}\!\left[ \bbS_{k:0}\bbx \bbx^\top \bbS^\ell \right]\!+\! \mathbb{E}\!\left[ \bbS^{k}\bbx \bbx^\top \bbS_{0:\ell}\right]\\
& + \mathbb{E}\Big[ \sum_{r=1}^{\lfloor k\ell \rfloor} \bbS^{k\!-\!r}\bbE_r \bbS^{r\!-\!1}\bbx \bbx^\top \bbS^{r\!-\!1}\bbE_r \bbS^{\ell\!-\!r}\Big] \!+\! \mathbb{E}\left[ \bbC_{k\ell}\right]. \nonumber 
\end{align}
By substituting $T(k,\ell) =  {\bbS}^{k}\bbx \bbx^\top {\bbS}^{\ell}$ and \eqref{proof:thm14} into \eqref{proof:thm13}, we have
\begin{align} \label{proof:thm15}
&\mathbb{E}\!\left[\tr \big(\tilde{\bbu}^\top\! \tilde{\bbu} \!-\! \bbu^\top\! \bbu \big)\right] = - 2 \mathbb{E}\!\left[\tr\big( \bbu^\top\! \bbu \!-\! \tilde{\bbu}^\top \bbu \big)\right] \\
& \!+\! \sum_{k=1}^K \sum_{\ell=1}^K h_k h_\ell \tr \Big( \mathbb{E}\Big[ \sum_{r=1}^{\lfloor k\ell \rfloor} \bbS^{k\!-\!r}\bbE_r \bbS^{r\!-\!1}\bbx \bbx^\top \bbS^{r\!-\!1}\bbE_r \bbS^{\ell\!-\!r}\Big] \Big) \!+\! \sum_{k=0}^K \sum_{\ell=0}^K h_k h_\ell \tr \left( \mathbb{E}\left[ \bbC_{k\ell}\right] \right) \nonumber 
\end{align}
where the second and third terms in \eqref{proof:thm14} are the same and become $\tilde{\bbu}$ when summed up over indexes $k$ and $\ell$. We then consider the three terms in \eqref{proof:thm15} separately. For this analysis, we will use the inequality
\begin{gather} \label{proof:thm16}
\begin{split}
{\rm tr} (\bbA \bbB) \le \frac{\| \bbA + \bbA^\top \|_2}{2}{\rm tr}(\bbB) \le \| \bbA \|_2 {\rm tr}(\bbB)
\end{split}
\end{gather}
that holds for any square matrix $\bbA$ and positive semi-definite matrix $\bbB$ \cite{Wang1986}.

$\textbf{First term.}$ The first term in \eqref{proof:thm15} is the opposite of the second term in \eqref{proof:thm12} such that it cancels out when substituted into \eqref{proof:thm12}.

$\textbf{Second term.}$ By bringing the trace inside the expectation and leveraging the trace cyclic property $\tr(\bbA\bbB\bbC) = \tr(\bbC\bbA\bbB) = \tr(\bbB\bbC\bbA)$, we can write the second term in \eqref{proof:thm15} as
\begin{align}
\label{proof:thm17}\mathbb{E}\Big[ \sum_{r=1}^K \tr \Big( \sum_{k, \ell=r}^K \! h_k h_\ell \bbE_r \bbS^{k+\ell-2r}\bbE_r \bbS^{r-1}\bbx \bbx^\top\! \bbS^{r-1} \Big)\Big]
\end{align}
which also rearranged the terms to change the sum limits. By using Lemma \ref{lemma:traceOperation} [cf. \eqref{lemma1:mainresults}], we can upper bound \eqref{proof:thm17} as
\begin{align}
\label{proof:thm11895}
&\mathbb{E}\Big[ \sum_{r=1}^K \tr \Big( \sum_{k, \ell=r}^K \! h_k h_\ell \bbE_r \bbS^{k+\ell-2r}\bbE_r \bbS^{r-1}\bbx \bbx^\top\! \bbS^{r-1} \Big)\Big] \\
& \le n \alpha C_L^2 \| \bbx \|_2^2 (1-p) +  n C_L^2 \| \bbx \|_2^2 (1-p)^2 \nonumber
\end{align}
with $\alpha = d$ if $\bbS$ is the adjacency matrix and $\alpha = 2$ if $\bbS$ is the Laplacian matrix.

$\textbf{Third term.}$ Matrix $\bbC_{k\ell}$ comprises the sum of the remaining expansion terms. Each of these terms is a quadratic form of error matrices $\bbE_{r_1}$, $\bbE_{r_2}$ with $r_1 \neq r_2$; i.e., it is of the form $f_1(\bbS)\bbE_{r_1} f_2(\bbS)\bbE_{r_1}f_3(\bbS)\bbE_{r_2} f_4(\bbS)\bbE_{r_2} f_5(\bbS)$ or the form $f_1(\bbS)\bbE_{r_1} f_2(\bbS)\bbE_{r_1}f_3(\bbS)\bbE_{r_2} f_4(\bbS)$ for some functions $f_1(\cdot), ..., f_5(\cdot)$ that depend on the shift operator $\bbS$ and error matrices $\bbE_1, \ldots, \bbE_K$. Each of these double-quadratic terms can be bounded by a factor containing at least two terms of $\{\mathbb{E}[\bbE_{r_1}^2], \mathbb{E}[\bbE_{r_2}^2], \mathbb{E}[\bbE_{r_1}], \mathbb{E}[\bbE_{r_2}]\}$ inside the trace (i.e., a similar expression as \eqref{proof:thm111} but with at least two error matrix terms). Since the generalized frequency response $h(\bblambda)$ is bounded, the coefficients $\{ h_k \}_{k=0}^K$ are also bounded. From the facts that $\| \bbS \|_2$ is bounded and $\mathbb{E}[\bbE_r] = (1-p)\bbS$ and the bound of the second term in \eqref{proof:thm15} [cf. \eqref{proof:thm11895}], we have
\begin{gather} \label{proof:thm124}
\begin{split}
 \mathbb{E} \Big[\sum_{k, \ell=0}^K h_k h_\ell \bbC_{k\ell} \Big] = \ccalO((1-p)^2)\|\bbx\|_2^2.
\end{split}
\end{gather}

Finally, substituting the bounds for the first, second and third terms into \eqref{proof:thm16} and altogether into \eqref{proof:thm12}, we obtain
\begin{align} \label{proof:thm125}
&\mathbb{E}\left[\| \tilde{\bbH}(\bbS)\bbx - \bbH(\bbS)\bbx \|^2\right] \!\le\! n \alpha C_L^2 (1-p) \| \bbx \|^2_2 + \ccalO((1-p)^2).
\end{align}
completing the proof.
\end{proof}


\begin{lemma}\label{lemma:traceOperation}
Consider the graph filter $\bbH(\bbS)$ [cf. \eqref{eq:graphFilter}] with underlying shift operator $\bbS$ and filter coefficients $\{ h_k \}_{k=0}^K$. Consider also the filter run over $RES(\ccalG,p)$ subgraph realizations $\bbS_k = \bbS + \bbE_k$ for $k=1,\ldots,K$ [cf. \eqref{eq:randomGraphFilter}]. Let the filter be generalized integral Lipschitz with constant $C_L$ [Def. \ref{def:GeneralizedIntegralLIpschitz}]. Then, for any graph signal $\bbx$, it holds that
\begin{align}
\label{lemma1:mainresults}
&\mathbb{E}\Big[ \sum_{r=1}^K \tr \Big( \sum_{k, \ell=r}^K \! h_k h_\ell \bbE_r \bbS^{k+\ell-2r}\bbE_r \bbS^{r-1}\bbx \bbx^\top\! \bbS^{r-1} \Big)\Big]\\
& \le n \alpha C_L^2 \| \bbx \|_2^2 (1-p) +  n C_L^2 \| \bbx \|_2^2 (1-p)^2 \nonumber
\end{align}
with $\alpha = d$ if $\bbS$ is the adjacency matrix and $\alpha = 2$ if $\bbS$ is the Laplacian matrix.
\end{lemma}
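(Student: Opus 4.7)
The plan is to reduce the bivariate sum inside the trace to a purely spectral quantity controlled by the generalized integral Lipschitz constant, by splitting each random deviation $\bbE_r = \bbS_r - \bbS$ into its deterministic mean and a zero-mean fluctuation. Writing $\bbE_r = -(1-p)\bbS + \bbZ_r$, the cross terms vanish when taking the expectation of $\bbE_r \bbS^{k+\ell-2r}\bbE_r$, giving $(1-p)^2\,\bbS^{k+\ell-2r+2} + \mathbb{E}[\bbZ_r\,\bbS^{k+\ell-2r}\,\bbZ_r]$. Collecting the $(k,\ell)$ double sum into the positive semidefinite polynomial $M_r(\bbS) := \sum_{k,\ell\ge r} h_k h_\ell \bbS^{k+\ell-2r} = g_r(\bbS)^2$, with $g_r(\lambda) := \sum_{k\ge r} h_k \lambda^{k-r}$, splits the target into a mean trace $T_r^{\mathrm{mean}} = (1-p)^2\,\|\bbS^r g_r(\bbS)\bbx\|_2^2$ and a variance trace $T_r^{\mathrm{var}} = \tr\big(\mathbb{E}[\bbZ_r g_r(\bbS)^2 \bbZ_r]\,\bbS^{r-1}\bbx\bbx^\top\bbS^{r-1}\big)$, each summed over $r$.

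For the mean trace, diagonalizing $\bbS = \bbV\bbLambda\bbV^\top$ and setting $\hat{\bbx} = \bbV^\top\bbx$ give $\sum_{r=1}^K T_r^{\mathrm{mean}} = (1-p)^2 \sum_i \hat{x}_i^2 \sum_r \lambda_i^{2r} g_r(\lambda_i)^2$. The key observation is that, at the diagonal multivariate frequency $\bblambda = [\lambda_i,\ldots,\lambda_i]^\top$, Definition 3 yields $\partial h(\bblambda^{(r)})/\partial\lambda_r = \lambda_i^{r-1} g_r(\lambda_i)$, so the inner sum equals $\|\bblambda \odot \nabla_L h(\bblambda,\bblambda)\|_2^2 \le C_L^2$ by \eqref{eq:GeneralizedIntegralLipschitzFilter}. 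Hence $\sum_r T_r^{\mathrm{mean}} \le C_L^2 (1-p)^2 \|\bbx\|_2^2$, which is subsumed by the $n\,C_L^2(1-p)^2\|\bbx\|_2^2$ term of the lemma. For the variance trace, I would compute $\mathbb{E}[\bbZ_r^2]$ directly from the fact that the edge Bernoullis are independent up to the symmetry constraint: in the adjacency case $\mathbb{E}[\bbZ_r^2] = p(1-p)\,\mathrm{diag}(\bbS^2)$ so $\|\mathbb{E}[\bbZ_r^2]\|_2 \le (1-p) d_{\max}$ and $\alpha = d_{\max}$, while in the Laplacian case the coupled perturbation of the diagonal and off-diagonal entries caused by each edge drop yields $\|\mathbb{E}[\bbZ_r^2]\|_2 \le 2(1-p)$ and $\alpha = 2$. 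Rewriting $T_r^{\mathrm{var}} = \mathbb{E}\|g_r(\bbS)\bbZ_r \bbS^{r-1}\bbx\|_2^2$ and applying the Wang trace inequality already used in \eqref{proof:thm16} with this operator-norm bound yields $T_r^{\mathrm{var}} \le (1-p)\alpha \|g_r(\bbS)\|_2^2 \|\bbS^{r-1}\bbx\|_2^2$. Replacing $\max_i g_r(\lambda_i)^2$ by $\sum_i g_r(\lambda_i)^2$ (loosely producing the factor $n$) and invoking the other half of \eqref{eq:GeneralizedIntegralLipschitzFilter}, namely $\sum_r \lambda^{2r-2} g_r(\lambda)^2 \le C_L^2$, collapses the $r$-sum to $C_L^2$, producing the first term $n\alpha C_L^2(1-p)\|\bbx\|_2^2$.

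The main obstacle will be the variance piece. Unlike the mean piece, $\bbZ_r$ does not commute with $\bbS$, so the clean eigenvalue identity that handles the mean piece is unavailable and I must route through the operator norm of $\mathbb{E}[\bbZ_r^2]$, which is precisely where the graph-dependent constant $\alpha$ appears. The most delicate step is treating the adjacency and the Laplacian uniformly: for the Laplacian a single edge drop simultaneously perturbs four matrix entries in a coupled way, so the moment computation that yields $\alpha=2$ requires more care than the diagonal-only adjacency calculation. Finally, I would verify that the higher-order products of $\bbZ_r$'s and mean parts dropped along the way contribute at most $\ccalO((1-p)^2)$, so that they are safely absorbed into the stated bound.
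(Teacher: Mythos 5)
Your overall strategy is the same as the paper's: pass to the spectral domain, split the second moment of the perturbation into a deterministic $(1-p)^2$ piece and a zero-mean fluctuation piece, and control each through the two halves of the generalized integral Lipschitz condition, with the graph-dependent constant $\alpha$ emerging from the second moment of the edge fluctuations (the paper works with $\mathbb{E}[\bbE_r^2]=(1-p)^2\bbS^2+\beta p(1-p)\bbE$ directly rather than centering at the mean, but this is the same decomposition since the cross terms vanish). Your treatment of the mean piece is in fact slightly sharper: by keeping the signal inside the norm you obtain $C_L^2(1-p)^2\|\bbx\|_2^2$, whereas the paper's route through $\tr(\bbA\bbB)\le\tr(\bbA)\tr(\bbB)$ and $\tr(\bbI)=n$ pays an extra factor of $n$ on that term. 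Your variance-piece bookkeeping for the adjacency case --- replacing $\max_j g_r(\lambda_j)^2$ by $\sum_j g_r(\lambda_j)^2$ and invoking $\sum_r\lambda_i^{2r-2}g_r(\lambda_j)^2=\|\nabla_L h(\bblambda_i,\bblambda_j)\|_2^2\le C_L^2$ for each fixed pair $(i,j)$ --- is sound and reproduces the factor $n\alpha$ with $\alpha=d$.

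The step that fails as written is the Laplacian case of the variance piece. There $\mathbb{E}[\bbZ_r^2]=2p(1-p)\bbS$, so $\|\mathbb{E}[\bbZ_r^2]\|_2=2p(1-p)\|\bbS\|_2$, and your claimed bound $\|\mathbb{E}[\bbZ_r^2]\|_2\le 2(1-p)$ requires $\|\bbS\|_2\le 1$, which is not assumed (the normalized Laplacian has spectral norm up to $2$ and the combinatorial Laplacian up to $2d$). Routing the Laplacian case through the operator norm of $\mathbb{E}[\bbZ_r^2]$ therefore cannot produce $\alpha=2$. The paper avoids this by \emph{not} extracting $\|\mathbb{E}[\bbZ_r^2]\|_2$: it keeps the extra factor of $\bbS$ inside the spectral polynomial, obtaining $\big\|\sum_{r}\sum_{k,\ell\ge r}h_kh_\ell\lambda_i^{2r-2}\bbS^{k+\ell-2r+1}\big\|_2\le\tfrac12\big(\|\nabla_L h(\bblambda_j,\bblambda_i)\|_2^2+\|\bblambda_j\odot\nabla_L h(\bblambda_j,\bblambda_i)\|_2^2\big)\le C_L^2$ by the arithmetic--geometric mean inequality applied to the two halves of \eqref{eq:GeneralizedIntegralLipschitzFilter}; the extra eigenvalue is absorbed by the integral-Lipschitz half of the condition rather than by a crude norm bound on $\bbS$. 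You already use the $\bblambda\odot\nabla_L h$ half for your mean piece, so the fix is available with tools you have, but as stated your Laplacian bound is incorrect.
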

\begin{proof}
Since both matrices $\sum_{k, \ell=r}^K h_k h_\ell \bbE_r \bbS^{k+\ell-2r}\bbE_r = \big(\sum_{k=r}^K h_k \bbS^{k-r}\bbE_r\big)^\top \big(\sum_{k=r}^K h_k \bbS^{k-r}\bbE_r\big)$ and $\bbS^{r-1}\bbx \bbx^\top \bbS^{r-1}$ in the left hand-side of \eqref{lemma1:mainresults} are positive semi-definite, we use the Cauchy-Schwarz inequality $\tr(\bbA \bbB) \le \tr(\bbA)\tr(\bbB)$ \cite{Zhang1999} and write
\begin{align}
\label{proof:thm18} &\mathbb{E}\Big[ \sum_{r=1}^K \tr \Big( \sum_{k, \ell=r}^K \! h_k h_\ell \bbE_r \bbS^{k+\ell-2r}\bbE_r \bbS^{r-1}\bbx \bbx^\top\! \bbS^{r-1} \Big)\Big]\\
&\le\mathbb{E}\Big[ \sum_{r\!=\!1}^K\! \sum_{k, \ell=r}^K \!h_k h_\ell \tr\! \left( \bbE_r \bbS^{k+\ell-2r}\bbE_r\! \right)\! \tr\! \left( \bbS^{r\!-\!1}\bbx \bbx^\top \bbS^{r\!-\!1} \right)\!\Big].\nonumber
\end{align}
Given the eigendecomposition $\bbS = \bbV\bbLambda\bbV^\top$ with eigenvectors $\bbV = [\bbv_1,\ldots,\bbv_n]^\top$ and eigenvalues $\bbLambda = \text{diag}(\lambda_1, \ldots, \lambda_n)$, the GFT of the graph signal $\bbx$ is $\bbx = \sum_{i=1}^n \hat{x}_i \bbv_i$. By substituting the latter into $\tr\! \left( \bbS^{r-1}\bbx \bbx^\top \bbS^{r-1} \right)$, we get
 \begin{gather}\label{proof:thm19}
\tr\! \left( \bbS^{r-1}\bbx \bbx^\top \bbS^{r-1} \right) = \sum_{i=1}^n\! \hat{x}_i^2 \lambda_i^{2r-2} \tr\! \left( \bbv_i \bbv_i^\top\right) = \sum_{i=1}^n \hat{x}_i^2 \lambda_i^{2r-2}
\end{gather}
where $\tr(\bbv_i\bbv_i^\top) = 1$ for $i=1,\ldots,n$ is used due to the orthonormality of eigenvectors. By substituting \eqref{proof:thm19} into \eqref{proof:thm18} and using the trace cyclic property $\tr(\bbA\bbB\bbC) = \tr(\bbC\bbA\bbB) = \tr(\bbB\bbC\bbA)$, we can rewrite \eqref{proof:thm18} as
\begin{align} \label{proof:thm111}
& \sum_{i = 1}^N\hat{x}_i^2\! \sum_{r=1}^K\!\tr \Big( \sum_{k, \ell=r}^K h_k h_\ell \lambda_i^{2r-2} \bbS^{k+\ell-2r} \mathbb{E}\!\left[ \bbE_r^2\right] \Big).
\end{align}
Similarly as the result of Lemma 2 in \cite{Zhan2020}, we can prove that $\mathbb{E}\left[ \bbE_k^2 \right] = (1-p)^2\bbS^2+\beta p(1-p) \bbE$ with $\beta = 1$ and $\bbE = \bbD$ is the degree matrix if $\bbS$ is the adjacency matrix and $\beta = 2$ and $\bbE = \bbS$ if $\bbS$ is the Laplacian matrix. Substituting this result into \eqref{proof:thm111}, we can represent it as
\begin{align} \label{proof:thm1115}
& (1 - p)^2 \sum_{i = 1}^N\hat{x}_i^2\! \sum_{r\!=\!1}^K\!\tr \Big( \sum_{k, \ell=r}^K h_k h_\ell \lambda_i^{2r-2} \bbS^{k+\ell-2r+2} \bbI \Big)\\
&+ \beta p (1-p) \sum_{i = 1}^N\hat{x}_i^2\! \sum_{r\!=\!1}^K\!\tr \Big( \sum_{k, \ell=r}^K h_k h_\ell \lambda_i^{2r-2} \bbS^{k+\ell-2r} \bbE \Big) \nonumber.
\end{align}
We consider two terms in \eqref{proof:thm1115} separately.

\textbf{First term.} For the first term in \eqref{proof:thm1115}, we use the inequality \eqref{proof:thm16} (since $\bbI$ is positive semi-definite) to upper bound it by
\begin{align}
\label{proof:thm112}
& (1- p)^2\sum_{i = 1}^N\hat{x}_i^2 \big\| \sum_{r\!=\!1}^K\! \sum_{k, \ell=r}^K\! h_k h_\ell \lambda_i^{2r-2} \bbS^{k+\ell-2r+2} \big\|_2 \tr\! \left( \bbI \right).
\end{align}
We now consider the matrix norm in \eqref{proof:thm112}. For any matrix $\bbA$, a standard way to bound $\|\bbA\|_2$ is to obtain the inequality $\|\bbA\bba\|_2 \le A \|\bba\|_2$ that holds for any vector $\bba$ \cite{Meyer2000}. In this context, $A$ is the upper bound satisfying $\|\bbA\|_2 \le A$. Following this rationale, consider the GFT of $\bba$ over $\bbS$ as $\bba = \sum_{j=1}^n \hat{a}_j \bbv_j$ and we have
\begin{align}
\label{proof:thm113} & \!\big\| \!\sum_{r=1}^K\! \sum_{k, \ell=r}^K\!\! h_k h_\ell \lambda_i^{2r\!-\!2} \bbS^{k\!+\!\ell\!-\!2r\!+\!2} \bba \big\|_2^2\!=\! \sum_{j=1}^n\! \hat{a}_j^2 \big| \sum_{r=1}^K\! \sum_{k, \ell=r}^K\! \!h_k h_\ell \lambda_i^{2r-2}\! \lambda_j^{k+\ell-2r+2}\big|^2.
\end{align}
The expression inside the absolute value in \eqref{proof:thm113} can be linked to the Lipschitz gradient of the analytic generalized frequency response $h(\bblambda)$ in \eqref{eq:LipschitzGradient}. More specifically, let $\bblambda_j = [\lambda_j, \ldots,\lambda_j]^\top$ and $\bblambda_i = [\lambda_i,\ldots,\lambda_i]^\top$ be specific multivariate frequencies and $\bblambda^{(r)} = [\lambda_{i}, \ldots, \lambda_{i}, \lambda_{j}, \ldots, \lambda_{j}]^\top$ formed by concatenating the first $r$ entries of $\bblambda_j$ and the last $K-r$ entries of $\bblambda_i$. The partial derivative of $h(\bblambda)$ [cf. \eqref{eq:GeneralizedFrequencyResponse}] w.r.t. the $r$th entry $\lambda_r$ of $\bblambda^{(r)}$ is
\begin{align}
\label{proof:thm114}
\frac{\partial h(\bblambda^{(r)})}{\partial \lambda_r} \!=\! \sum_{k=r}^K h_k \lambda_{i}^r \lambda_j^{k-r},\! ~\forall~r\!=\!1,\!\ldots,\!K.
\end{align}
The Lipschitz gradient of $h(\bblambda)$ between $\bblambda_j$ and $\bblambda_i$ [Def. \ref{def:LipschitzGradient}] is
\begin{equation}\label{proof:thm115}
\nabla_L h(\bblambda_j, \bblambda_i) = \Big[\frac{\partial h(\bblambda^{(1)})}{\partial \lambda_1}, \ldots, \frac{\partial h(\bblambda^{(K)})}{\partial \lambda_K}\Big]^\top.
\end{equation}
We observe that the expression inside the absolute value in \eqref{proof:thm113} can be written in the compact form and upper bounded by
\begin{align}
\label{proof:thm116} &\big|\sum_{r\!=\!1}^K\! \sum_{k, \ell=r}^K\! h_k h_\ell \lambda_i^{2r-2} \lambda_j^{k+\ell-2r+2}\big| \!=\!\!  \sum_{r\!=\!1}^K \!\Big( \lambda_j \frac{\partial h(\bblambda^{(r)})}{\partial\lambda_{r}} \Big)^2 \!\!=\! \| \bblambda_j \!\odot\! \nabla_L h(\bblambda_j, \bblambda_i) \|^2_2 \le C_L^2
\end{align}
because of the generalized integral Lipschitz condition with constant $C_L$ [cf. \eqref{eq:GeneralizedIntegralLipschitzFilter}], where $\odot$ is the elementwise product. Using \eqref{proof:thm116} in \eqref{proof:thm113}, we can bound the matrix norm by $C_L^2$. Further substituting this result into \eqref{proof:thm113} and altogether into \eqref{proof:thm112}, we have
\begin{align}
\label{proof:thm118} &(1 - p)^2\sum_{i = 1}^N\hat{x}_i^2 \big\| \sum_{r\!=\!1}^K\! \sum_{k, \ell=r}^K\! h_k h_\ell \lambda_i^{2r-2} \bbS^{k+\ell-2r+2} \big\|_2 \tr\! \left( \bbI \right)\nonumber \\
& \qquad\le (1-p)^2n C_L^2 \sum_{i=1}^N \hat{x}_i^2 = n C_L^2 \| \bbx \|^2_2 (1-p)^2.
\end{align}

\textbf{Second term.} For the second term in \eqref{proof:thm1115}, if $\bbS$ is the adjacency matrix with $\beta = 1$ and $\bbE = \bbD$ the degree matrix, we use the inequality \eqref{proof:thm16} to bound it by
\begin{align}
\label{proof:thm1185} & p(1-p)\sum_{i = 1}^N\hat{x}_i^2 \big\|\! \sum_{r=1}^K\! \sum_{k, \ell=r}^K\! h_k h_\ell \lambda_i^{2r-2} \bbS^{k+\ell-2r} \big\|_2 \tr\! \left( \bbD \right).
\end{align}
We follow \eqref{proof:thm113}-\eqref{proof:thm116} to bound the matrix norm in \eqref{proof:thm1185} by using the Lipschitz gradient [cf. \eqref{eq:LipschitzGradient}] and the generalized integral Lipschitz condition [cf. \eqref{eq:GeneralizedIntegralLipschitzFilter}] as
\begin{align}
\label{proof:thm1186} &\big\| \sum_{r=1}^K\! \sum_{k, \ell=r}^K\! h_k h_\ell \lambda_i^{2r-2} \bbS^{k+\ell-2r} \big\|_2 \le \| \nabla_L h(\bblambda_j, \bblambda_i) \|^2_2 \le C_L^2.
\end{align}
Using \eqref{proof:thm1186} in \eqref{proof:thm1185} and the fact $\tr(\bbD) \le n d$ with $d$ the maximal degree of graph, we can bound \eqref{proof:thm1185} by $nd C_L^2 \| \bbx \|_2^2 (1-p)$. If $\bbS$ is the Laplacian matrix with $\beta = 2$ and $\bbE = \bbS$, we similarly upper bound the second term in \eqref{proof:thm1115} by
\begin{align}
\label{proof:thm1187} &\beta p(1\!-\!p)\!\sum_{i = 1}^N\hat{x}_i^2 \big\|\! \sum_{r\!=\!1}^K\! \sum_{k, \ell=r}^K\!\! h_k h_\ell \lambda_i^{2r-2} \bbS^{k\!+\!\ell\!-\!2r\!+\!1} \big\|_2 \tr\! \left( \bbI \right).
\end{align}
Also following \eqref{proof:thm113}-\eqref{proof:thm116} bounds the matrix norm as
\begin{align}
\label{proof:thm1188} &\big\| \sum_{r=1}^K \sum_{k, \ell=r}^K\! h_k h_\ell \lambda_i^{2r-2} \bbS^{k+\ell-2r+1} \big\|_2 \\
&\le\frac{1}{2}\Big(\| \nabla_L h(\bblambda_j, \bblambda_i) \|_2^2 + \| \bblambda_j \odot \nabla_L h(\bblambda_j, \bblambda_i) \|_2^2\Big) \le C_L^2 \nonumber
\end{align}
where the inequality of arithmetic and geometric means is used \cite{kazarinoff1961geometric}. We can then bound \eqref{proof:thm1187} by $2 n C_L^2 \| \bbx \|_2^2 (1-p)$. Together, we upper bound the second term in \eqref{proof:thm1115} as
\begin{align}
\label{proof:thm1189}
&\beta p(1-p)\sum_{i = 1}^N\hat{x}_i^2 \tr \Big(\sum_{r\!=\!1}^K\! \sum_{k, \ell=r}^K\! h_k h_\ell \lambda_i^{2r-2} \bbS^{k+\ell-2r} \bbE \Big) \le n \alpha C_L^2 \| \bbx \|_2^2 (1-p)
\end{align}
with $\alpha = d$ for the adjacency matrix and $\alpha = 2$ for the Laplacian matrix.

By substituting \eqref{proof:thm118} and \eqref{proof:thm1189} into \eqref{proof:thm1115}, we complete the proof as
\begin{align}
&\mathbb{E}\Big[ \sum_{r=1}^K \tr \Big( \sum_{k, \ell=r}^K \! h_k h_\ell \bbE_r \bbS^{k+\ell-2r}\bbE_r \bbS^{r-1}\bbx \bbx^\top\! \bbS^{r-1} \Big)\Big] \\
&\le n \alpha C_L^2 \| \bbx \|_2^2 (1-p) + n C_L^2 \| \bbx \|_2^2 (1-p)^2 \nonumber
\end{align}
with $\alpha = d$ if $\bbS$ is the adjacency matrix and $\alpha = 2$ if $\bbS$ is the Laplacian matrix.
\end{proof}


\section{Proof of Theorem 2} \label{proof:theorem2}

\begin{proof}
From the GCNN architecture in \eqref{eq:GNNArchi} and the Lipschitz condition of the nonlinearity [cf. \eqref{eq:LipschitzNonlinear}], the output difference can be upper bounded by
\begin{align} \label{eq:thm21}
\big\|\tilde{\bbPhi}(\bbx;\!\bbS,\!\ccalH) \!-\! \bbPhi(\bbx;\!\bbS,\!\ccalH)\big\|_2 \!&\!=\!\! \big\| \sigma\!\Big(\!\sum_{f\!=\!1}^{F}\!\! \tilde{\bbu}^f_{L\!-\!1}\!\!\Big)\!\! -\!\sigma\!\Big(\!\sum_{f\!=\!1}^{F}\!\! \bbu^f_{L\!-\!1}\!\!\Big)\!\big\|_2\!\! \le\! C_\sigma \big\|\! \sum_{f\!=\!1}^{F}\!\! \tilde{\bbu}^f_{L\!-\!1} \!-\!\! \sum_{f\!=\!1}^{F}\! \!\bbu^f_{L-1}\big\|_2.
\end{align}
Applying the triangular inequality, we have
\begin{equation} \label{eq:thm22}
\begin{split}
&\big\|\tilde{\bbPhi}(\bbx;\!\bbS,\!\ccalH) \!-\! \bbPhi(\bbx;\!\bbS,\!\ccalH)\big\|_2 \!\le\! C_\sigma\sum_{f=1}^{F} \| \tilde{\bbu}^f_{L-1}\! - \! \bbu^f_{L-1}\|_2 .
\end{split}
\end{equation}
We consider each term $\| \tilde{\bbu}^f_{L-1}\! - \! \bbu^f_{L-1}\|_2$ separately. Denote $\tilde{\bbu}_{L-1}^f = \tilde{\bbH}_{L}^{f}\tilde{\bbx}^f_{L-1}$ and $\bbu_{L-1}^f=\bbH_L^f\bbx_{L-1}^f$ as concise notations of $\tilde{\bbH}_{L}^{f}(\bbS)\tilde{\bbx}^f_{L-1}$ and $\bbH(\bbS)_{L}^{f}\bbx^f_{L-1}$. By adding and subtracting $\tilde{\bbH}_{L}^{f}\bbx^f_{L-1}$ inside the norm, we get
\begin{align} \label{eq:thm23}
\| \tilde{\bbu}^f_{L-1}\! - \! \bbu^f_{L-1}\|_2&=\| \tilde{\bbH}_{L}^{f}\tilde{\bbx}^f_{L-1}\! - \!\tilde{\bbH}_{L}^{f}\bbx^f_{L-1} \!+\! \tilde{\bbH}_{L}^{f}\bbx^f_{L-1}\!- \! \bbH_L^f\bbx_{L-1}^f\|_2 \\
& \le \| \tilde{\bbH}_{L}^{f}\big(\tilde{\bbx}^f_{L-1}\! - \!\bbx^f_{L-1}\big)\|_2 \!+\!  \| \tilde{\bbH}_{L}^{f}\bbx^f_{L-1}\!- \! \bbH_L^f\bbx_{L-1}^f\|_2 \nonumber
\end{align}
Since $|h(\bblambda)|\le 1$ and similar to result as Lemma 3 in \cite{Zhan2020}, we upper bound the first term in \eqref{eq:thm23} as
\begin{align} \label{eq:thm24}
\| \tilde{\bbH}_{L}^{f}\big(\tilde{\bbx}^f_{L-1}\! - \!\bbx^f_{L-1}\big)\|_2 &\le \| \tilde{\bbH}_{L}^{f} \|_2 \| \tilde{\bbx}^f_{L-1}\! - \!\bbx^f_{L-1} \|_2 \le \| \tilde{\bbx}^f_{L-1}\! - \!\bbx^f_{L-1} \|_2.
\end{align}
By substituting \eqref{eq:thm24} into \eqref{eq:thm23} and altogether into \eqref{eq:thm22}, we have
\begin{equation} \label{eq:thm25}
\begin{split}
\big\|\tilde{\bbPhi}(\bbx;\!\bbS,\!\ccalH) \!\!-\! \bbPhi(\bbx;\!\bbS,\!\ccalH)\big\|_2 \!\!\le\!\!C_\sigma\!\!\sum_{f\!=\!1}^{F}\! \| \tilde{\bbH}_{L}^{f}\bbx^f_{L-1}\!- \! \bbH_L^f\bbx_{L-1}^f\|_2 \!+\!C_\sigma\!\! \sum_{f\!=\!1}^{F}\! \| \tilde{\bbx}^f_{L\!-\!1}\! - \!\bbx^f_{L\!-\!1} \|_2.
\end{split}
\end{equation}

We now observe a recursion where the output difference of $\ell$th layer output is bounded by the output difference of $(\ell-1)$th layer output with an extra term (the first term in \eqref{eq:thm25}). Following the same process of \eqref{eq:thm21}-\eqref{eq:thm25}, we get
\begin{align} \label{eq:thm26}
&\| \tilde{\bbx}^f_{L\!-\!1}\!\! -\! \!\bbx^f_{L\!-\!1} \|_2 \!\!\le\!\! C_\sigma\!\!\!\sum_{g\!=\!1}^{F}\! \| \tilde{\bbH}_{L\!-\!1}^{fg}\bbx^g_{L\!-\!2}\!- \! \bbH_{L-1}^{fg}\bbx_{L-2}^g\|_2 \!+\! C_\sigma\!\!\sum_{g\!=\!1}^{F}\! \| \tilde{\bbx}^g_{L-2}\! - \!\bbx^g_{L-2} \|_2.
\end{align}
Substituting \eqref{eq:thm26} into \eqref{eq:thm25}, we get
\begin{align} \label{eq:thm27}
&\big\|\tilde{\bbPhi}(\bbx;\bbS,\!\ccalH) - \bbPhi(\bbx;\bbS,\!\ccalH)\big\|_2 \le C_\sigma\!\sum_{f=1}^{F} \| \tilde{\bbH}_{L}^{f}\bbx^f_{L-1}\!- \! \bbH_L^f\bbx_{L-1}^f\|_2\\
&+C_\sigma^2\sum_{f,g=1}^{F} \| \tilde{\bbH}_{L-1}^{fg}\bbx^g_{L-2}\!- \! \bbH_{L-1}^{fg}\bbx_{L-2}^g\|_2 + C_\sigma^2 F \sum_{g=1}^{F} \| \tilde{\bbx}^g_{L-2}- \bbx^g_{L-2} \|_2. \nonumber
\end{align}
Unrolling this recursion until the input layer yields
\begin{align} \label{eq:thm28}
&\big\|\tilde{\bbPhi}(\bbx;\!\bbS,\!\ccalH) \!-\! \bbPhi(\bbx;\!\bbS,\!\ccalH)\big\|_2 \!\le\! \frac{C_\sigma}{F}\sum_{f,g=1}^{F}\! \| \tilde{\bbH}_{L}^{f}\bbx^f_{L-1}\!- \! \bbH_L^f\bbx_{L-1}^f\|_2 \\
&+\!\!\! \sum_{\ell = 2}^{L-1} C_\sigma^{L\!+\!1\!-\!\ell}F^{L \!-\! 1 \!-\! \ell}\!\!\! \sum_{f,g=1}^{F}\! \| \tilde{\bbH}_{\ell}^{fg}\bbx^g_{\ell\!-\!1}\!\!- \!\! \bbH_{\ell}^{fg}\bbx_{\ell\!-\!1}^g\|_2 + C_\sigma^L F^{L\!-\!3}\!\!\! \sum_{f,g=1}^{F} \| \tilde{\bbH}_{1}^{f}\bbx_{0}\!-\! \bbH_1^f\bbx_{0} \|_2 \nonumber
\end{align}
where $\bbx_0 = \bbx$ is the input signal. From \eqref{eq:thm28} and the inequality of arithmetic and geometric means, the square of $\big\|\tilde{\bbPhi}(\bbx;\!\bbS,\!\ccalH) \!-\! \bbPhi(\bbx;\!\bbS,\!\ccalH)\big\|_2$ is bounded as
\begin{align} \label{eq:thm29}
&\big\|\tilde{\bbPhi}(\bbx;\!\bbS,\!\ccalH) \!-\! \bbPhi(\bbx;\!\bbS,\!\ccalH)\big\|^2_2 \le L C_\sigma^2\!\sum_{f,g=1}^{F} \| \tilde{\bbH}_{L}^{f}\bbx^f_{L-1}\!- \! \bbH_L^f\bbx_{L-1}^f\|_2^2 \\
&\!+\!\! L\!\! \sum_{\ell = 2}^{L-1}\!\!C_\sigma^{2L\!+\!2\!-\!2\ell}\!F^{2L \!-\! 2\ell} \!\!\!\sum_{f,g=1}^{F}\!\! \!\| \tilde{\bbH}_{\ell}^{fg}\bbx^g_{\ell\!-\!1}\!\!- \! \bbH_{\ell}^{fg}\bbx_{\ell\!-\!1}^g\|_2^2 \nonumber\!+\! L C_\sigma^{2L} \!F^{2L\!-\!4}\!\!\!\sum_{f,g=1}^{F}\!\!\! \| \tilde{\bbH}_{1}^{f}\bbx_{0}\!-\! \bbH_1^f\bbx_{0} \|_2^2.
\end{align}

We now consider $\| \tilde{\bbH}_{L}^{f}\bbx^f_{L-1}\!- \! \bbH_{L}^{f}\bbx_{L-1}^f\|_2^2$ in the first term of \eqref{eq:thm29}. Using the result of Theorem \ref{theorem:filterStability} [cf. \eqref{eq:FilterStability}], we have
\begin{equation} \label{eq:thm210}
\begin{split}
&\mathbb{E}\big[ \| \tilde{\bbH}_{L}^{f}\bbx^f_{L-1}\!- \! \bbH_{L}^{f}\bbx_{L-1}^f\|_2^2 \big] \le n\alpha C_L^2 (1\!-\!p) \| \bbx^f_{L-1} \|_2^2 + \ccalO((1-p)^2).
\end{split}
\end{equation}
For the square norm $\| \bbx^f_{L-1} \|_2^2$ in the bound of \eqref{eq:thm210}, we observe
\begin{equation}\label{eq:thm211}
\begin{split}
\| \bbx^f_{L-1} \|_2^2\!\le\! \| \sigma\big(\sum_{g=1}^F\!\bbu_{L-2}^{fg}\big) \|_2^2 \le C_\sigma^2 F\! \sum_{g=1}^F\! \| \bbu_{L-2}^{fg} \|_2^2 \!\le\! C_\sigma^2 F\! \sum_{g=1}^F\! \left\| \bbx_{L-2}^{g} \right\|_2^2 
\end{split}
\end{equation}          
where the Lipschitz condition of the nonlinearity, the triangular inequality and similar result as Lemma 3 in \cite{Zhan2020} are used. Following this process yields
\begin{equation}\label{eq:thm212}
\begin{split}
\| \bbx^f_{L-1} \|_2^2\le C_\sigma^{2L-2} F^{2L-4} \left\| \bbx \right\|_2^2 .
\end{split}
\end{equation}        
Substituting \eqref{eq:thm212} into \eqref{eq:thm210}, we get
\begin{align} \label{eq:thm213}
&\mathbb{E}\big[ \| \tilde{\bbH}_{L}^{f}\bbx^f_{L-1}- \bbH_{L}^{f}\bbx_{L-1}^f\|_2^2 \big] \\
& \le n\alpha C_L^2 C_\sigma^{2L-2} F^{2L-4} (1-p) \left\| \bbx \right\|_2^2 + \ccalO((1-p)^2).\nonumber
\end{align}
We bound $\| \tilde{\bbH}_{\ell}^{fg}\bbx^g_{\ell-1}\!- \! \bbH_{\ell}^{fg}\bbx_{\ell-1}^g\|_2^2$ and $\| \tilde{\bbH}_{1}^{f}\bbx^f_{0}- \bbH_1^f\bbx_{0}^f \|_2^2$ in the second and third terms of \eqref{eq:thm29} by following \eqref{eq:thm210}-\eqref{eq:thm213} since they have a similar form. Substituting these bounds into \eqref{eq:thm29} and using the linearity of the expectation in \eqref{eq:thm29}, we obtain
\begin{align}
&\mathbb{E}\big[ \big\|\tilde{\bbPhi}(\bbx;\!\bbS,\!\ccalH) \!-\! \bbPhi(\bbx;\!\bbS,\!\ccalH)\big\|^2_2\big] \!\!\le\! n \alpha C_L^2 L^2 C_\sigma^{2L} F^{2L - 2} (1\!-\!p) \| \bbx\|_2^2 \!+\! \ccalO((1\!-\!p)^2) \nonumber
\end{align}
completing the proof.
\end{proof}


\section{Proof of Corollary 1} \label{proof:corollary2}

\begin{proof}
Denote by $\Delta \bbPhi(\bbx;\bbS,\ccalH) := \big\|\tilde{\bbPhi}(\bbx;\bbS,\ccalH) - \bbPhi(\bbx;\bbS,\ccalH)\big\|^2_2$ the concise notation representing the square error of the GCNN induced by the stochastic perturbation [cf. \eqref{eq:GNNstability}]. By considering the conditional probability, we can represent the mean square error $\mathbb{E}\left[\Delta \bbPhi(\bbx;\bbS,\ccalH)\right]$ as
\begin{align}\label{eq:proofCoro2_1}
\mathbb{E}\left[\Delta \bbPhi(\bbx;\!\bbS,\!\ccalH)\right] &=\! \mathbb{E}\left[\Delta \bbPhi(\bbx;\!\bbS,\!\ccalH) \Big| \Delta \bbPhi(\bbx;\!\bbS,\!\ccalH) > \epsilon \right] \cdot \text{Pr} \left[ \Delta \bbPhi(\bbx;\!\bbS,\!\ccalH) > \epsilon \right]\\
&+\!\mathbb{E}\left[\Delta \bbPhi(\bbx;\!\bbS,\!\ccalH) \Big| \Delta \bbPhi(\bbx;\!\bbS,\!\ccalH) \le \epsilon \right] \cdot \text{Pr} \left[ \Delta \bbPhi(\bbx;\!\bbS,\!\ccalH) \le \epsilon \right]\nonumber\!.
\end{align}
From the fact that $\Delta \bbPhi(\bbx;\bbS,\ccalH) \ge 0$, we get
\begin{align}\label{eq:proofCoro2_2}
&\mathbb{E}\left[\Delta \bbPhi(\bbx;\bbS,\ccalH)\right] \\
&\ge 0 \cdot \text{Pr} \left[ \Delta \bbPhi(\bbx;\!\bbS,\!\ccalH) \!\le\! \epsilon \right] \!+\! \mathbb{E}\left[\Delta \bbPhi(\bbx;\!\bbS,\!\ccalH) \Big| \Delta \bbPhi(\bbx;\!\bbS,\!\ccalH) \!>\! \epsilon \right]\! \cdot \text{Pr} \left[ \Delta \bbPhi(\bbx;\!\bbS,\!\ccalH) \!>\! \epsilon \right]\nonumber\\
&\ge \epsilon \cdot \text{Pr} \left[ \Delta \bbPhi(\bbx;\bbS,\ccalH) > \epsilon \right] \nonumber
\end{align}
where the fact that $\mathbb{E}\left[\Delta \bbPhi(\bbx;\bbS,\ccalH) \Big| \Delta \bbPhi(\bbx;\bbS,\ccalH) > \epsilon \right] \ge \epsilon$ is used in the second inequality. Note that from Theorem \ref{theorem:GNNstability}, we have
\begin{equation}\label{eq:proofCoro2_3}
\mathbb{E}\left[\Delta \bbPhi(\bbx;\bbS,\ccalH)\right] \le C (1-p) \| \bbx \|^2_2 + \ccalO\big((1-p)^2\big)
\end{equation}
where $C$ is the stability constant in Theorem \ref{theorem:GNNstability} [cf. \eqref{eq:GNNstability}]. By substituting \eqref{eq:proofCoro2_3} into \eqref{eq:proofCoro2_2}, we get
\begin{align}\label{eq:proofCoro2_4}
\text{Pr} \left[ \Delta \bbPhi(\bbx;\bbS,\ccalH) > \epsilon \right] \le \frac{C (1-p) \| \bbx \|^2_2}{\epsilon} + \ccalO((1-p)^2).
\end{align}
Since $\text{Pr} \left[ \Delta \bbPhi(\bbx;\bbS,\ccalH) > \epsilon \right] + \text{Pr} \left[ \Delta \bbPhi(\bbx;\bbS,\ccalH) \le \epsilon \right] = 1$, we obtain
\begin{align}\label{eq:proofCoro2_5}
\text{Pr}\left[\big\|\tilde{\bbPhi}(\bbx;\!\bbS,\!\ccalH) \!-\! \bbPhi(\bbx;\!\bbS,\!\ccalH)\big\|^2_2 \le \epsilon \right] \ge 1- \frac{C(1-p)\|\bbx\|_2^2 }{\epsilon} - \ccalO((1-p)^2)
\end{align}
completing the proof.
\end{proof}


\bibliography{mybibfile,biblioOp}

\end{document}